\documentclass{article} 
\usepackage{iclr2026_conference,times}

\usepackage{amsmath,amsfonts,bm}

\def\eqref#1{equation~\ref{#1}}

\def\1{\bm{1}}

\DeclareMathAlphabet{\mathsfit}{\encodingdefault}{\sfdefault}{m}{sl}
\SetMathAlphabet{\mathsfit}{bold}{\encodingdefault}{\sfdefault}{bx}{n}

\usepackage{hyperref}
\usepackage{url}

\usepackage{graphicx} 
\usepackage{amssymb}
\usepackage{amsmath}
\usepackage{booktabs}
\usepackage{multirow}
\usepackage{pifont}
\newcommand{\cmark}{\ding{51}} 
\newcommand{\xmark}{\ding{55}} 
\usepackage{mathtools}
\usepackage{amsthm}
\usepackage{float}

\usepackage{wrapfig}

\title{Functional MRI Time Series Generation 
\\ via Wavelet-Based Image Transform 
\\ and Spectral Flow Matching 
\\ for Brain Disorder Identification}

\author{Hwa Hui Tew$^1$\thanks{Equal contribution.} , Junn Yong Loo$^1$\footnotemark[1] , Fang Yu Leong$^1$, Julia K. Lau$^1$, Ding Fan$^1$, 
\\ \textbf{Hernando Ombao$^3$, Rapha\"el C.-W. Phan$^1$, Chee Pin Tan$^2$, Chee-Ming Ting$^1$}\thanks{Corresponding author.}
\\
$^1$School of Information Technology, Monash University Malaysia \\
$^2$School of Engineering, Monash University Malaysia\\
$^3$Statistics Program, King Abdullah University of Science and Technology \\
\texttt{\{hwa.tew,loo.junnyong,ting.cheeming\}@monash.edu}
}

\iclrfinalcopy 
\begin{document}

\maketitle

\begin{abstract}
Functional Magnetic Resonance Imaging (fMRI) provides non-invasive access to dynamic brain activity by measuring blood oxygen level-dependent (BOLD) signals over time. However, the resource-intensive nature of fMRI acquisition limits the availability of high-fidelity samples required for data-driven brain analysis models. While modern generative models can synthesize fMRI data, they often remain challenging in replicating their inherent non-stationarity, intricate spatiotemporal dynamics, and physiological variations of raw BOLD signals. To address these challenges, we propose Dual-Spectral Flow Matching (DSFM), a novel fMRI generative framework that cascades dual frequency representation of BOLD signals with spectral flow matching. Specifically, our framework first converts BOLD signals into a wavelet decomposition map via a discrete wavelet transform (DWT) to capture globalized transient and multi-scale variations, and projects into the discrete cosine transform (DCT) space across brain regions and time to exploit localized energy compaction of low-frequency dominant BOLD coefficients. Subsequently, a spectral flow matching model is trained to generate class-conditioned cosine-frequency representation. The generated samples are reconstructed through inverse DCT and inverse DWT operations to recover physiologically plausible time-domain BOLD signals. This dual-transform approach imposes structured frequency priors and preserves key physiological brain dynamics. Ultimately, we demonstrate the efficacy of our approach through improved downstream fMRI-based brain network classification. The code is available at \href{https://github.com/htew0001/DSFM.git}{https://github.com/htew0001/DSFM.git}.
\end{abstract}

\section{Introduction}

Recent advances in deep generative modeling have shown promising capability in synthesizing realistic yet diverse variations of neuroimaging modalities \citep{IJCAI2024}. Among available modalities, functional MRI (fMRI) signals offer a non-invasive view of neuronal activity, critical for diagnosing neuropsychiatric and neurodevelopmental disorders \citep{ICIP2022,JBHI2024}. However, fMRI data collection is costly and yields limited, often imbalanced datasets \citep{2025tewfmri}. These shortcomings limit the generalizability of data-driven brain analysis models, ultimately affecting the reliability of computer-aided clinical tools for neurological and psychiatric conditions \citep{limitations1,TMI2022}. To address these challenges, generative models have been explored for fMRI signal synthesis to support data augmentation and downstream applications \citep{limitations2,2025tewfmri}. 

Most existing approaches generate brain connectivity directly in the functional connectivity (FC) space, where BOLD signal dependencies are summarized by a single correlation matrix \citep{biswal2025history}. For instance, \citet{tan2024fmri} proposes a DCGAN that preserves connectomic structure and improves the performance of downstream FC classifiers. Similarly, BrainFC-CGAN jointly trains adversarial and supervised loss components to preserve the subject identity of real FC on synthetic samples \citep{tan2024brainfc}. However, such FC representations encode static pairwise relations into dyads and do not effectively capture transient network states within human brain networks \citep{shabestari2025frequency}.

Recent works have revisited time-domain modeling of fMRI as an alternative to correlation-based functional connectivity (FC). \citet{diffusionts} designs diffusion-TS, a denoising diffusion probabilistic model (DDPM) for fMRI time series data generation, showing improved robustness over GANs and (Variational Autoencoder) VAE-based generative models. \citet{hu2024fm} proposes FM-TS that accelerates the sampling step yet provides quality synthetic samples via a flow matching framework. While these methods shift focus from traditional FC to time-series data generation, their feasibility and effectiveness for neuroimaging tasks remain largely unexplored.
We argue that limiting generative modeling to FC matrices or the raw time series is inadequate to faithfully reproduce the brain's transient state, multiscale oscillations, and cross-frequency interactions due to difficulties in disentangling physiologically driven fluctuations (e.g., cardiac pulsation, respiratory cycles, motion-induced artifacts) \citep{biswal2025history}. In contrast, a time-frequency/scale representation that captures time and spectral BOLD information can fully reproduce the rich spatiotemporal dynamics of BOLD signals. Motivated by T2I-Diff and ImagenTime, both of which frame time-series signals as an image-generation task \citep{2025tewfmri,naiman2024utilizing}. T2I-Diff specifically remodeled and validated the feasibility of this time-frequency image-based approach for generating BOLD signals. Crucially, the performance gains were modest due to the fixed-resolution Short-Time Fourier Transform (STFT) representation, which neglects fine-grained transients and attenuates frequency amplitude modulations, leading to artifacts during the image-to-signal reconstruction \citep{2025tewfmri}.

To address these issues, in this paper, we propose Dual-Spectral Flow Matching (DSFM), an fMRI generation framework that cascades two spectral transformations of BOLD signals and integrates a spectral flow matching for generative modeling. Our framework first decomposed BOLD signals using the discrete wavelet transform (DWT) to form multiresolution time-scale scalogram images. Subsequently, we compute a discrete cosine transform (DCT) that exploits low-frequency BOLD coefficients. These transforms produce a dual-spectral view in which local and global dynamics are jointly represented. Additionally, our framework introduces a spectral-domain flow matching for efficient and high-fidelity generation of the time-scale fMRI scalograms conditioned on subject classes. The generated time-frequency scalograms are then reverted to BOLD signals via image-to-time series transforms. 
Our main contributions are summarized as follows:

\begin{enumerate}
    \item Our proposed DSFM framework is the first to jointly leverage DWT and DCT, forming a unified dual-spectral image transform to capture both global and local spatiotemporal and spectral features for fMRI BOLD signal generation and brain disorder classification.
    \item We develop a spectral flow matching to model a heat dissipation process in the DCT domain to achieve efficient, coarse-to-fine generation aligned with the frequency hierarchy of the dual-spectral representation. This enables DSFM to leverage the spectral sparsity inherent in fMRI signals to effectively capture diverse brain profiles.
    \item Our results show that DSFM demonstrates strong performance on unconditional and conditional spectral image synthesis, and achieves improvement in brain disorder classification compared to recent time-series and fMRI generation baselines.
\end{enumerate}

\begin{figure}[t]
\includegraphics[width=1\linewidth]{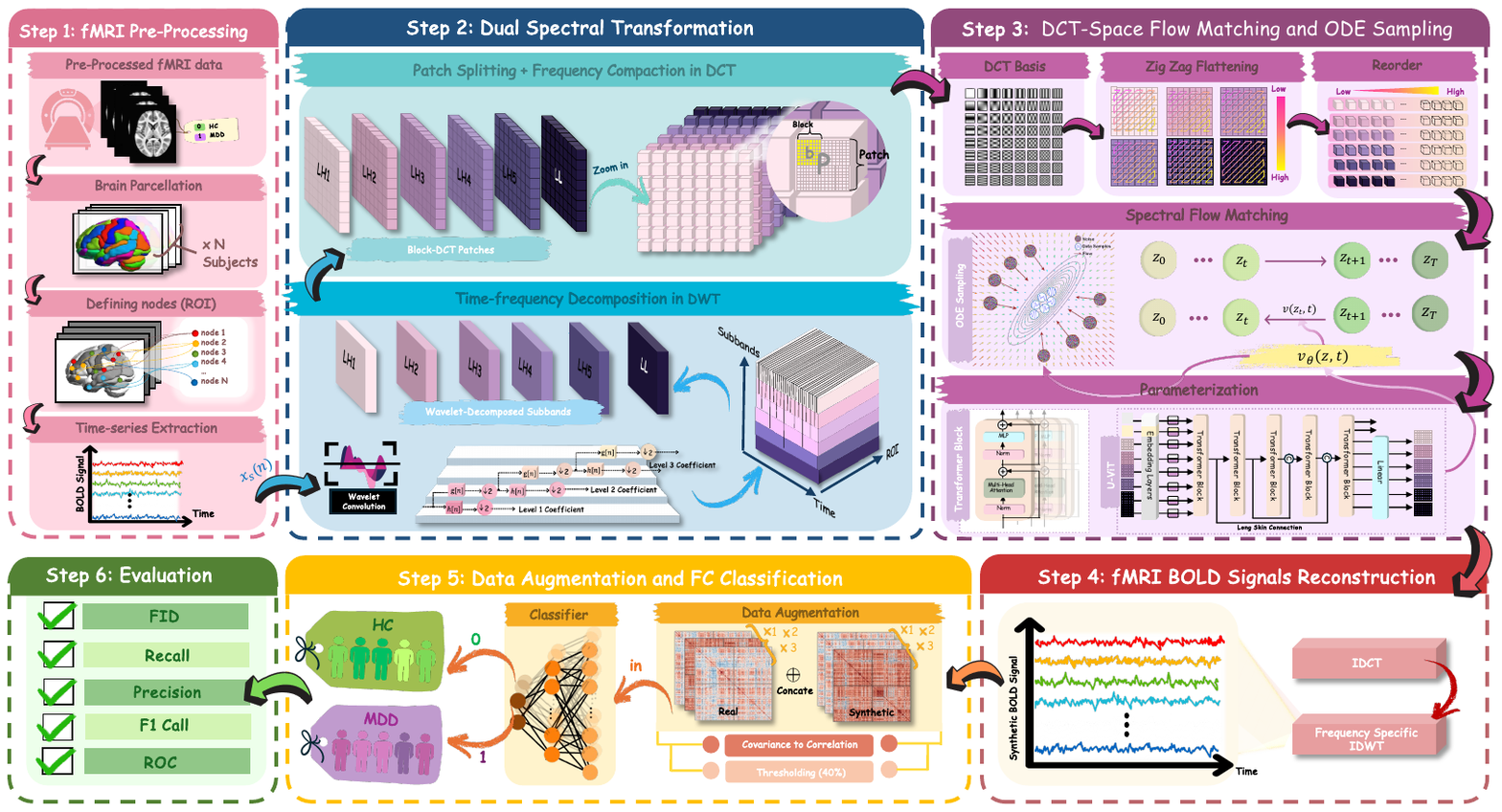}
\caption{The pipeline of DSFM. ROI-based BOLD time series are first extracted, followed by DWT-based multiresolution decomposition and blockwise 2D DCT for localized spectral encoding. U-ViT is used to model the velocity field in the DCT domain for ODE-based sampling. The reconstructed signals (via IDCT and IDWT) are then used for data augmentation, FC matrix construction, and classification. Finally, fidelity and downstream performance are evaluated.} 
\label{fig1}
\end{figure}







\section{Methods}
\label{gen_inst}

\subsection{Discrete Wavelet Transform and Its Inversion}
Fig. \ref{fig1} provides an overview of our proposed framework. Given high-dimensional fMRI signals from \( S \) subjects, denoted as \( \mathcal{X} = \{x_s\}_{s=1}^{S} \), where each subject \( x_s \in \mathbb{R}^{D \times T} \) consists of \( D \) regions of interest (ROIs) recorded over \( T \) time points, our objective is to learn the underlying real data distribution \( p_{\text{data}}(\mathcal{X}) \) and generate a synthetic distribution \( p_{\theta}(\mathcal{X}) \) that is statistically indistinguishable from the real data. 
Unlike conventional time-series generative tasks that operate exclusively in the time domain, our approach transforms fMRI time series into time-scale images using the DWT, defined as follows:
\begin{equation}
W(k,j) = \sum_{n=1}^{N} x(n) \, \psi_{j,k}[n],
\end{equation}
where \(x_s(n)\) is the BOLD signal at local time index \( n \in \{1, 2, \dots, N\} \). Here, \(\psi_{j,k}[n] = 2^{-k/2}\psi[2^{k}n-j]\) is the dyadic  wavelet basis function, where scale \( k \in \{1, 2, \dots, \left\lfloor \log_2 N \right\rfloor\} \) controls the frequency resolution, and translation index \(j \in \{1, 2, \dots, N/2^{k}\} \) determines the time location, derived from the mother wavelet 
\(\psi_{j,k}[n]\).
To construct a wavelet decomposition map, we upsample each wavelet subband to the original time length and stack them along the scale axis, forming a multiresolution wavelet decomposition map. Thus forming a full wavelet coefficient tensor \(W(i,j,k) \in \mathbb{R}^{D \times T_{\psi} \times C}\), where $T_{\psi} = N/2^{C}$ and $C = \left\lfloor \log_2 N \right\rfloor$, that captures both low-frequency trends and high-frequency transients in the fMRI BOLD signals. We further perform component-wise normalization to accentuate the difference between high and low coefficients over brain regions and time. As shown in Fig. \ref{overview}, this allows the time-series signals to be represented as multichannel images with preserved spectral-temporal characteristics.


To reconstruct the original signals from the generated scalogram representation, we first denormalize the predicted wavelet components \(\hat{W}^{(i)}(j,k) \in \mathbb{R}^{T\times C}\) of each $i^\text{th}$ ROI. The coefficients are then downsampled according to their corresponding dyadic scales and computed the inverse DWT (IDWT) to obtain the fMRI BOLD signals as follows: 
\begin{equation}
\hat{x}(n) = \frac{1}{N} \sum_{k=1}^{C} \sum_{j=1}^{T}W^{(i)}(j,k) \, \psi_{j,k}[n].
\end{equation}
Finally, these wavelet subbands are reconstructed through a hierarchical combination of approximation and detail components across all scales to obtain the reconstructed time-domain signal \(\hat{x}_s\) for each subject. This process ensures that the inherited spectral-temporal characteristics of the original fMRI BOLD signals are well-preserved.

\begin{figure*}[t]
\centering
\includegraphics[width=0.95\linewidth]{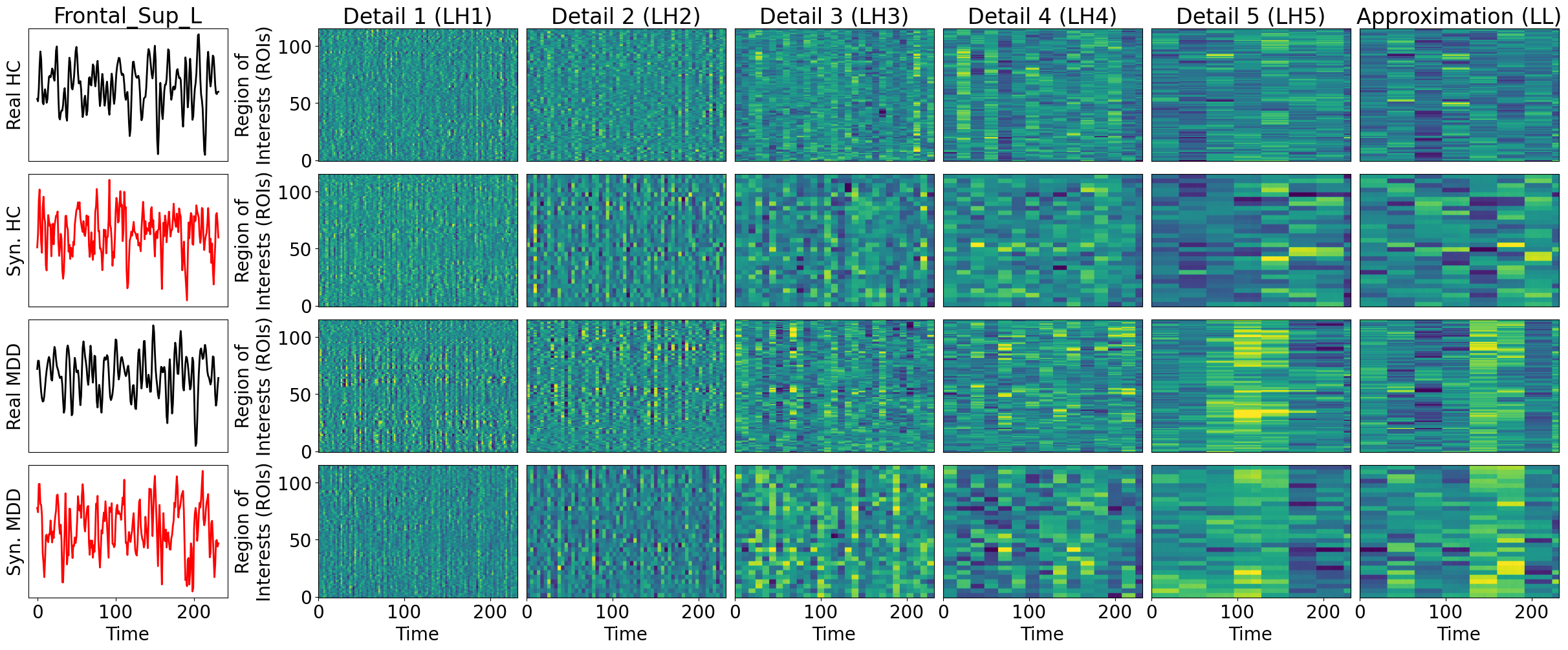}
\caption{Original (Rows 1\&3) vs. synthetic BOLD signals (Rows 2 \&4) and generated normalized scalograms. Our framework generates new synthetic BOLD signals as opposed to correlation matrices or functional connectivity, with distributional statistics that closely match the original samples.}
\label{overview}
\end{figure*}



\subsection{Discrete Cosine Transform for BOLD Signals}
\label{headings}
To extract localized energy compactions of low-frequency spontaneous BOLD coefficients. We divide each subband map \(\hat{W}^{(k)}(i,j) \in \mathbb{R}^{D\times T_{\psi}}\) of each $k^\text{th}$ wavelet scale into non-overlapping 2D blocks of size \(B \times B\), resulting in a set of blocks (patches): 
\begin{align}
\begin{split}
W^{(k)} \equiv \left\{ W^{(k)}_p{(x,y)} \in \mathbb{R}^{B \times B} \right\}_{p=1}^{P},
\end{split}
\end{align}
where $P$ is the number of blocks per subband image and $B$ is the block size. Each block is then transformed via 2D type-II DCT, as follows: 
\begin{align}
\begin{split}
D^{(k)}(u,v) = &\; \alpha(u) \, \alpha(v) \sum_{x=1}^{B} \sum_{y=1}^{B} W^{(k)}(x, y)
\cos\!\left[ \frac{(2x+1)u\pi}{2B} \right] \cos\!\left[ \frac{(2y+1)v\pi}{2B} \right],
\end{split}
\end{align}
where \( \alpha(u) = \sqrt{\frac{1}{B}} \) if \( u = 0 \), and \( \alpha(u) = \sqrt{\frac{2}{B}} \) otherwise. The resulting \( D^{(k)}(u,v) \in \mathbb{R}^{B \times B} \) at each scale $k$ represents the DCT coefficients within each block.

To recover the full image representation, we apply the inverse 2D DCT (IDCT) to each block \(D^{(k)}(u,v)\). The signal block is reconstructed via the following inverse transform: 
\begin{align}
\begin{split}
\hat{W}^{(k)}(x, y) = &\;  \sum_{u=1}^{B} \sum_{v=1}^{B} \alpha(u) \, \alpha(v) D^{(k)}(u,v)  \cos\!\left[ \frac{(2x+1)u\pi}{2B} \right] \cos\!\left[ \frac{(2y+1)v\pi}{2B} \right].
\end{split}
\end{align}
Once all blocks have been transformed back into the original spatial domain, we stitch the patches to recover the full subband map. Since the DCT is applied to non-overlapping blocks, the reconstruction involves simply tiling the inverse-transformed blocks back into their original positions within the subband image. 
This blockwise DCT preserves localized low-frequency structure in the ROI–time space, while high-frequency components can optionally be truncated to suppress noise. The resulting set of filtered subbands can then be passed back into the IDWT to recover the time-domain fMRI BOLD signal \(\hat{x}_s(n)\), ensuring global and local spectral characteristics are retained. 

\subsection{Spectral Flow Matching in DCT Domain}
\label{others}
Recent studies have empirically demonstrated that pixel-based diffusion models exhibit approximate autoregressive behavior in the frequency domain \citep{dieleman2024spectral, falck2025fourier}. Specifically, diffusion models \citep{ho2020ddpm, song2021scorebased} tend to eliminate high-frequency components early in the forward process, followed by progressively lower-frequency components as the diffusion timestep advances. While prior studies focus on the Fourier basis, this property also holds in the DCT domain \citep{skorokhodov2025improving, ning2025dctdiff}, which offers practical advantages: real-valued orthogonality, energy compaction in low-frequency bands, and compatibility with block-wise architectures.

Modeling diffusion directly in the frequency domain enables the exploitation of spectral sparsity for designing frequency-aware noise schedules. However, existing frequency-domain generative models \citep{hoogeboom2023blurring, rissanen2023generative} remain constrained to the diffusion framework, which relies on stochastic differential equation (SDE) sampling and typically requires hundreds to thousands of steps for high-quality synthesis. In contrast, flow-matching approaches based on ordinary differential equations (ODEs) provide a deterministic alternative with significantly lower sampling complexity.
In this work, we introduce a spectral flow-matching framework that extends frequency-based generative modeling beyond the diffusion paradigm. 

First, consider a forward-time heat dissipation process \citep{rissanen2023generative} as an alternative to the conventional isotropic diffusion, described by the following stochastic partial differential equation (SPDE):
\begin{align}
\begin{split} \label{eq:heat_dissipation_SPDE}
dx_t(c) &= \eta(t) \, \Delta_c \, x_t(c) \, dt + G(t) \, dW(t),
\end{split}
\end{align}
where $x_t: \mathbb{R}^2 \times \mathbb{R_+} \rightarrow \mathbb{R}$ is an idealized continuous-space representation of a single image channel at time $t \in [0,1]$, and $\Delta_c = \nabla_c \cdot \nabla_c$ is the Laplace operator with respect to the spatial image coordinates $c$; $\eta(t)$ and $G(t)$ are time-dependent scalar drift and matrix-valued diffusion coefficients, respectively. 
The corresponding reverse-time probability flow ODE \citep{song2021scorebased} is given by:
\begin{align}
\begin{split} \label{eq:probabilty_flow_ODE}
\frac{dx_t}{dt} &= \eta(t) \, \Delta_c \, {x}_t(c) - \frac{1}{2} \, G(t)G(t)^T \, \nabla_{x_t} \log p({x}_t).
\end{split}
\end{align}

Subsequently, define the forward and inverse DCT transforms formally as
\begin{align}
z_t = V^T x_t = \mathrm{DCT}(x_t),
\quad
x_t = V z_t = \mathrm{IDCT}(z_t),
\end{align}
where $V$ denotes the matrix of orthonormal DCT basis eigenvectors. It then follows that the Laplacian operator in \eqref{eq:heat_dissipation_SPDE} can be diagonalized via the eigendecomposition $\Delta_c = V\,\Lambda\,V^T$, where $\Lambda$ denotes the diagonal matrix of DCT mode-specific Laplacian eigenvalues.
Applying DCT to the forward-time SPDE \eqref{eq:heat_dissipation_SPDE} yields
\begin{align}
\begin{split} \label{eq:heat_dissipation_SPDE_DCT}
dz_t = - \eta(t)\,\Lambda\,z_t\,dt + G(t)\,dW(t),
\end{split}
\end{align}
where $W(t)$ is a standard Wiener process, but in the DCT domain. 
To obtain a frequency-ordered representation, we apply zig-zag flattening, which maps the two-dimensional DCT coefficient grid into a one-dimensional sequence sorted from low (upper-left) to high (bottom-right) frequencies. 
Subsequently, we apply per–DCT-mode signal-to-noise scaling to ensure that the DCT coefficients conform to the forward perturbation process of the proposed DCT flow governed by the SPDEs in equation \ref{eq:heat_dissipation_SPDE} and equation \ref{eq:heat_dissipation_SPDE_DCT}.
Moreover, given that $V$ is orthonormal, the following change-of-variables holds for any differentiable function $f$:
\begin{align*}
\begin{split}
V^T \nabla_x f(x) = V^T \, \bigg( \frac{\partial z}{\partial x} \bigg)^{\!\!T} \nabla_{z} f(z) 
&= \underbrace{V^T V}_{I} \, \nabla_{z} f(\underbrace{V^T x}_{z}) 
= \nabla_{z} f(z).
\end{split}
\end{align*}
By letting $f(x_t) = \log p(x_t)$, the score transforms as $V^T \nabla_{x_t} \log p(x_t) = \nabla_{z_t} \log p(z_t)$.
Since $\Lambda$ is diagonal and the DCT basis orthogonalizes the frequency modes, applying DCT to \eqref{eq:probabilty_flow_ODE}, the reverse-time probability flow ODE admits the following mode-wise decomposition:
\begin{align}
\begin{split} \label{eq:probabilty_flow_ODE_modewise}
\frac{dz_t[k]}{dt} = - \eta(t)\,\lambda_k\,z_t[k] - \frac{1}{2} \, g(t,k)^2 \, \nabla_{z_t[k]} \log p(z_t),
\end{split}
\end{align}
where $\lambda_k$ is the $k$-th diagonal entry of $\Lambda$, corresponding to the Laplacian eigenvalue of the $k^\text{th}$ DCT basis component, which evolves independently under the ODE dynamics.

The following proposition bridges between this DCT mode-wise probability flow ODE and the conditional velocity (vector) field in flow matching \citep{lipman2023flow}.
\newtheorem{theorem}{Theorem}
\newtheorem{proposition}[theorem]{Proposition}

\begin{proposition} \label{thm:proposition_1}
A mode-wise conditional perturbation kernel (isotropic in each DCT mode) is
\begin{align} \label{eq:perturbation_kernel}
\begin{split}
p(z_t[k] \,|\, z_0[k]) 
= \mathcal{N} \big(\mu(t,k) \, z_0[k], \sigma(t,k)^2 \big),
\end{split}
\end{align}
with mean and standard deviation (std) schedules
\begin{align} \label{eq:mean_std_schedules}
\begin{split}
&\mu(t,k) = \alpha(t) \, \omega(t,k), \quad 
\omega(t,k) = e^{-\lambda_k \tau(t)}, \quad
\sigma(t,k)^2 = 1 - \mu(t,k)^2,
\end{split}
\end{align}
where $\alpha(t)$ is the mean schedule of a variance-preserving (VP) diffusion process and $\tau(t) = \int_{0}^{t} \eta(s) \, ds$, satisfies the heat dissipation process (\ref{eq:heat_dissipation_SPDE}) and (\ref{eq:heat_dissipation_SPDE_DCT}). The mode-wise diffusion coefficients are then given by 
\begin{align} \label{eq:SPDE_drift_diffusion}
\begin{split} 
g(t,k)^2 = 2 \, \sigma(t,k) \, \big(\dot{\sigma}(t,k) - f(t,k) \, \sigma(t,k) \big),
\end{split}
\end{align}
where $f(t,k) = \frac{\dot{\alpha}(t)}{\alpha(t)} - \eta(t) \, \lambda_k$, and $\dot{\mu}(t,k)$, $\dot{\sigma}(t,k)$ denote time-derivatives of the mean and std schedules in (\ref{eq:mean_std_schedules}).
\end{proposition}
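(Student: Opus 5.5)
The plan is to treat each DCT mode $k$ as a scalar linear SDE and apply the standard Gaussian perturbation-kernel identity for linear SDEs. Combining the VP mean schedule with the heat-dissipation drift in \eqref{eq:heat_dissipation_SPDE_DCT} gives the mode-wise forward SDE
\begin{align*}
dz_t[k] = f(t,k)\, z_t[k]\, dt + g(t,k)\, dW_t[k],
\end{align*}
with $f(t,k) = \dot{\alpha}(t)/\alpha(t) - \eta(t)\lambda_k$. Because $\Lambda$ is diagonal and $W(t)$ is standard Brownian motion in the orthonormal DCT basis, taking $G(t)$ diagonal in that basis with entries $g(t,k)$ decouples the modes. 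Each mode is then a one-dimensional Ornstein--Uhlenbeck-type SDE, so conditioned on $z_0[k]$ the marginal $p(z_t[k]\,|\,z_0[k])$ is Gaussian. This gives the form \eqref{eq:perturbation_kernel}.

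\textbf{Step 1 (mean).} Taking expectations gives the ODE $\dot m = f(t,k)\, m$ with $m(0)=z_0[k]$. Its solution is $m(t) = \exp\!\big(\int_0^t f(s,k)\,ds\big) z_0[k]$. Since $\int_0^t \dot\alpha/\alpha \, ds = \log\alpha(t)$ (taking $\alpha(0)=1$) and $\int_0^t \eta(s)\,ds = \tau(t)$, this equals $\alpha(t)e^{-\lambda_k\tau(t)} z_0[k] = \mu(t,k)\, z_0[k]$. I will also record the identity $\dot\mu/\mu = f(t,k)$ for use below.

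\textbf{Step 2 (variance).} The conditional variance $v(t)$ satisfies the Lyapunov ODE $\dot v = 2 f(t,k)\, v + g(t,k)^2$ with $v(0)=0$. For the variance to be $\sigma(t,k)^2$, we need $2\sigma\dot\sigma = 2f\sigma^2 + g^2$, i.e. $g(t,k)^2 = 2\sigma(\dot\sigma - f\sigma)$, which is exactly \eqref{eq:SPDE_drift_diffusion}. Conversely, defining $g$ this way forces $v=\sigma^2$ by uniqueness of the linear ODE solution, provided $\sigma(0,k)=0$.

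\textbf{Step 3 (nonnegativity).} It remains to check that this $g^2$ is a valid squared diffusion coefficient for the prescribed $\sigma^2 = 1-\mu^2$. Differentiating gives $\sigma\dot\sigma = -\mu\dot\mu = -f\mu^2$. Substituting,
\begin{align*}
g^2 = 2(\sigma\dot\sigma - f\sigma^2) = 2\big(-f\mu^2 - f(1-\mu^2)\big) = -2f(t,k).
\end{align*}
Since $\dot\alpha\le 0$ for a VP schedule and $\lambda_k,\eta\ge 0$, we have $-f(t,k)\ge 0$, so $g^2\ge0$. The endpoint condition $\sigma(0,k)^2 = 1-\mu(0,k)^2 = 0$ holds because $\alpha(0)=1$ and $\tau(0)=0$.

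The main obstacle is not the calculation but the interpretive bookkeeping. One point is that the proposition's SPDE with drift $\eta\Delta$ acquires the extra VP factor $\dot\alpha/\alpha$; I would argue this by composing the VP scaling with heat dissipation, which commute because both act diagonally in the DCT basis. The other point is the sign convention of $\Lambda$: the eigenvalues of the Neumann Laplacian are nonpositive, so $\lambda_k$ must be read as their negatives for $-\eta\Lambda$ to be dissipative. I would state both conventions explicitly, then conclude by noting that the mode-wise score of this Gaussian kernel plugs into \eqref{eq:probabilty_flow_ODE_modewise} to give the flow-matching conditional velocity.
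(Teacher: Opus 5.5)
Your proposal is correct and follows essentially the same route as the paper. Both reduce to the scalar mode-wise SDE with drift $f(t,k)=\dot\alpha(t)/\alpha(t)-\eta(t)\lambda_k$, solve the mean equation $\dot\mu = f\mu$, use the Itô variance equation $\tfrac{d}{dt}\sigma^2 = 2f\sigma^2 + g^2$, and combine it with the VP constraint $\sigma^2 = 1-\mu^2$. Your additions make explicit what the paper leaves implicit: the Gaussianity of the linear SDE, the initial conditions $\alpha(0)=1$ and $\tau(0)=0$, the sign convention for $\lambda_k$, and the simplification $g(t,k)^2=-2f(t,k)\ge 0$, which is what equating the two expressions for $\tfrac{d}{dt}\sigma^2$ actually gives.
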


\begin{proof}
Refer to the Supplementary Material.
\end{proof}

\begin{table*}[t]
\centering
\caption{Comparison of unconditional Netsim dataset generation across SOTA and proposed model.}
\label{fmrigeneration1}
\resizebox{\textwidth}{!}{%
\begin{tabular}{@{} l|c|c|c|c|c|c|c|c @{}}
\toprule
\cmidrule(lr){2-7}\cmidrule(lr){8-9}
& CoT-GAN & DiffTime & DiffWave & TimeVAE & TimeGAN & Diffusion-TS & T2I-Diff & DSFM (Ours) \\
\midrule
cFID $\downarrow$
& 7.813$\pm$.550 & 0.340$\pm$.015 & 0.244$\pm$.018 & 14.449$\pm$.969 & 0.126$\pm$.002 & 0.105$\pm$.006 & 1.384$\pm$.107 & 0.193$\pm$.017 \\
\midrule
Corr. $\downarrow$
& 26.824$\pm$.449 & 1.501$\pm$.048 & 3.927$\pm$.049 & 17.296$\pm$.526 & 23.502$\pm$.039 & 1.411$\pm$.042 & 4.121$\pm$.094 & 4.552$\pm$.041 \\
\midrule
Disc. $\downarrow$
& 0.492$\pm$.018 & 0.245$\pm$.051 & 0.402$\pm$.029 & 0.476$\pm$.044 & 0.484$\pm$.042 & 0.167$\pm$.023 & 0.400$\pm$.059 & 0.497$\pm$.001 \\
\midrule
Pred. $\downarrow$
& 0.185$\pm$.003 & 0.100$\pm$.000 & 0.101$\pm$.000 & 0.113$\pm$.003 & 0.126$\pm$.002 & 0.099$\pm$.000 & 0.102$\pm$.001 & 0.104$\pm$.000 \\
\bottomrule
\end{tabular}%
}
\end{table*}

\begin{proposition} \label{thm:proposition_2}
A mode-wise conditional velocity field
\begin{align} \label{eq:conditional_vector_field}
\begin{split} 
\frac{dz_t[k]}{dt} \bigg|_{z_0[k]} \!\!\!\!\!= v(z_t | z_0; t, k) = \dot{\mu}(t,k) \, z_0[k] + \dot{\sigma}(t,k) \, \epsilon,
\end{split}
\end{align}
where $\epsilon \sim \mathcal{N}(0, 1)$, is equivalent to the conditional probability flow ODE
\begin{align} \label{eq:conditional_probability_flow_ODE}
\begin{split}
\frac{dz_t[k]}{dt} \bigg|_{z_0[k]} \!\!\!\!\!= - \eta(t)\,\lambda_k\,z_t[k] + \frac{1}{2} \, g(t)^2 \, \nabla_{z_t[k]} \log p(z_t | z_0).
\end{split}
\end{align}
Furthermore, it follows that the marginal velocity field
\begin{align} \label{eq:marginal_vector_field}
\begin{split} 
\frac{dz_t[k]}{dt} = v(z_t;  t, k) 
= \mathbb{E}_{p_{\text{data}}(z_0 | z_t)} \big[ v(z_t | z_0; t, k) \,|\, z_t \big],
\end{split}
\end{align} 
given by the law of the unconscious statistician \citep{lipman2024flow}, satisfies the marginal mode-wise probability flow ODE (\ref{eq:probabilty_flow_ODE_modewise}).
\end{proposition}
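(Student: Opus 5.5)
The plan is to work with the Gaussian reparametrization of the conditional perturbation kernel from Proposition~\ref{thm:proposition_1}. Write $z_t[k] = \mu(t,k)\, z_0[k] + \sigma(t,k)\,\epsilon$ with $\epsilon \sim \mathcal{N}(0,1)$. Differentiating in $t$ for fixed $(z_0[k],\epsilon)$ gives the conditional velocity (\ref{eq:conditional_vector_field}) directly. The task is then to rewrite this velocity as the drift of the conditional probability flow ODE (\ref{eq:conditional_probability_flow_ODE}).

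\textbf{Conditional equivalence.} I will eliminate $\epsilon$ using the conditional score. Since the kernel is Gaussian,
\[
\nabla_{z_t[k]} \log p(z_t|z_0) = -\frac{z_t[k]-\mu z_0[k]}{\sigma^2} = -\frac{\epsilon}{\sigma},
\qquad\text{so}\qquad \epsilon = -\sigma \nabla \log p(z_t|z_0).
\]
I will also eliminate $z_0[k]$ via $z_0[k] = (z_t[k]-\sigma\epsilon)/\mu$. Substituting both gives
\[
v = \frac{\dot\mu}{\mu}\, z_t[k] + \Big(\dot\sigma - \frac{\dot\mu}{\mu}\sigma\Big)\epsilon .
\]
Next I compute $\dot\mu/\mu$ from $\mu = \alpha\, e^{-\lambda_k\tau}$ with $\dot\tau = \eta$:
\[
\frac{\dot\mu}{\mu} = \frac{\dot\alpha}{\alpha} - \eta\lambda_k = f(t,k).
\]
The $\epsilon$ coefficient becomes $-\sigma(\dot\sigma - f\sigma)\nabla\log p = -\tfrac12 g(t,k)^2\, \nabla \log p(z_t|z_0)$, using the definition (\ref{eq:SPDE_drift_diffusion}). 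Hence
\[
v = f\, z_t[k] - \tfrac12 g^2\, \nabla \log p(z_t|z_0),
\]
which is the standard probability-flow form.

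\textbf{Main obstacle: matching the stated drift and sign.} The statement writes the drift as $-\eta\lambda_k z_t[k]$ and the score term with a $+$ sign. I would first check whether the VP factor $\dot\alpha/\alpha$ is intended to be absorbed into the heat term, for instance with $\alpha \equiv 1$ or with $\dot\alpha/\alpha$ folded into the $\eta$ scaling. I would also check whether the sign reflects reverse-time conventions. If neither reading works, I would state the identity with drift $f(t,k)z_t[k]$ and sign $-\tfrac12 g^2$, which is consistent with (\ref{eq:probabilty_flow_ODE_modewise}). I expect this bookkeeping of conventions, rather than any analytic step, to be the delicate part. I will also check that $g^2 \ge 0$ follows from $\sigma^2 = 1-\mu^2$: differentiating gives $\sigma\dot\sigma = -\mu\dot\mu$, hence $g^2 = -2\mu\dot\mu - 2f\sigma^2 = -2f$, which is nonnegative when $f \le 0$.

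\textbf{Marginalization.} Define $v(z_t) = \mathbb{E}[v(z_t|z_0)\mid z_t]$. The term $f z_t$ is unaffected by the conditional expectation. For the score term I use the identity
\[
\mathbb{E}_{p(z_0|z_t)}\big[\nabla \log p(z_t|z_0)\big] = \nabla \log p(z_t).
\]
This follows from $\nabla p(z_t) = \int \nabla p(z_t|z_0)\, p_{\text{data}}(z_0)\,dz_0$ and Bayes' rule. Linearity then gives $v(z_t;t,k) = f z_t[k] - \tfrac12 g^2 \nabla_{z_t[k]}\log p(z_t)$, which is (\ref{eq:probabilty_flow_ODE_modewise}).

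Finally, I would note that the marginal velocity generates the same marginal path $p_t$. This follows from the continuity equation argument of flow matching \citep{lipman2023flow}, since each conditional field transports its Gaussian kernel and the continuity equation is linear in the mixture over $z_0$. Mode-wise independence is inherited because the covariance is diagonal in the DCT basis.
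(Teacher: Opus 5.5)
Your proposal is correct and follows the paper's proof essentially step for step: reparametrize the kernel, eliminate $z_0[k]$ via $\dot\mu/\mu = f(t,k)$, identify the $\epsilon$-coefficient $\dot\sigma - f\sigma$ with $g(t,k)^2/(2\sigma)$, trade $\epsilon$ for the conditional score, and average that score over $p_{\text{data}}(z_0|z_t)$ to obtain the marginal score. Your ``main obstacle'' resolves in favor of your fallback: the paper gets the $+\tfrac12 g^2$ in (\ref{eq:conditional_probability_flow_ODE}) only by writing $\epsilon/\sigma = \nabla_{z_t[k]}\log p(z_t|z_0)$, whereas the Gaussian score is $-\epsilon/\sigma$ as you have it, and the paper's own derivation ends with drift $f(t,k)\,z_t[k]$ rather than $-\eta(t)\lambda_k z_t[k]$; so $v = f\,z_t[k] - \tfrac12 g(t,k)^2\,\nabla_{z_t[k]}\log p(z_t|z_0)$ is the true identity, and it is the one whose posterior average reproduces the sign in (\ref{eq:probabilty_flow_ODE_modewise}) (whose drift likewise omits the $\dot\alpha/\alpha$ term).
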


\begin{proof}
Refer to the Supplementary Material.
\end{proof}

Given this correspondence between the probability flow ODE from diffusion models and flow matching, we parameterize the velocity field $v_{\theta}$ using a deep neural network (U-ViT \citep{bao2023uvit}) and train it via the following conditional spectral flow matching (CSFM) loss:
\begin{align} \label{eq:flow_matching_loss}
\begin{split} 
&\mathcal{L}^{\text{CSFM}}(\theta) \!=\! \mathbb{E}_{t, p(z_t\,|\, z_0) p_{\text{data}}(z_0)} \big\| v_{\theta}(z_t; t, k) - v(z_t | z_0; t, k) \big\|^2,
\end{split}
\end{align}
where $v(z_t | z_0; t, k)$ is the conditional velocity field in (\ref{eq:conditional_vector_field}), with $z_t$ sampled from the per-mode conditional perturbation kernel (\ref{eq:perturbation_kernel}), and $t \sim \mathcal{U}(0,1)$ is uniformly sampled. Notably, this CSFM loss recovers the standard flow matching loss under the OT-CFM schedules $\mu(t) = 1 - t$ and $\sigma(t) = t$, where the time convention adopted here is the reverse of that in \citep{lipman2023flow}. Hence, our framework generalizes flow matching to a heat dissipation process in the DCT domain.
In our experiments, we use \(\alpha(t)\) from the variance-preserving (VP) cosine schedule and set \(\tau(t) = \sigma_{\max} \sin^2\left( \tfrac{\pi}{2} t \right)\) following \citep{hoogeboom2023blurring}, which observes optimal performance with \(\sigma_{\max} = 20\).

To enable class-conditioned generation, we employ classifier-free guidance \citep{classifierfree} by conditioning the velocity model on the class label \(c\), i.e., \(v_{\theta}(z_t; t, k, c)\) and set \(c = \varnothing\) for the unconditional model.
The conditional and unconditional models are trained jointly by randomly replacing the class label \(c\) with the null token \(\varnothing\) with probability \(p_{\varnothing}\). During sampling, the classifier-free guided velocity is obtained as a weighted combination of the model outputs \citep{zheng2023guidance}. Finally, DCT samples are generated by numerically integrating the learned flow velocity using an adaptive ODE solver.

\begin{table*}[t]
\centering
\caption{The best generation quality and classification performance on the MDD dataset for different generative models using the AAL atlas parcellation, trained on ground-truth data augmented at three levels. ``--'' denotes FC-based generation. \textit{Full results} refer to the Supplementary Material.}
\resizebox{\textwidth}{!}{%
\begin{tabular}{r|ccccccc}
\toprule
& \multicolumn{1}{c}{W/O Aug.}
& \multicolumn{1}{c}{Vanilla-GAN}
& \multicolumn{1}{c}{2D-DCGAN}
& \multicolumn{1}{c}{TimeGAN}
& \multicolumn{1}{c}{Diffusion TS}
& \multicolumn{1}{c}{T2I-Diff}
& \multicolumn{1}{c}{DSFM (Ours)} \\
\cmidrule(lr){2-2}\cmidrule(lr){3-3}%
\cmidrule(lr){4-4}\cmidrule(lr){5-5}%
\cmidrule(lr){6-6}\cmidrule(lr){7-7}%
\cmidrule(lr){8-8}
Metric
& Real (R.)
& R.\,+\,Synth.\ 3$\times$
& R.\,+\,Synth.\ 3$\times$
& R.\,+\,Synth.\ 1$\times$
& R.\,+\,Synth.\ 1$\times$
& R.\,+\,Synth.\ 1$\times$
& R.\,+\,Synth.\ 1$\times$ \\
\midrule
Context-FID $\downarrow$
& $-$   & $-$  & $-$  & $4.98\pm0.65$  & $2.06\pm0.21$ & $7.45\pm0.42$ & $1.51\pm0.41$\\
Correlational $\downarrow$
& $-$   & $-$  & $-$  & $197.05\pm17.75$  & $64.16\pm3.92$ & $62.32\pm1.04$ & $57.30\pm2.89$\\
Accuracy $\uparrow$
& $58.90\pm2.98$  & $58.86\pm2.24$  & $62.88\pm4.99$  & $66.78\pm1.66$ & $67.29\pm1.81$ & $66.87\pm3.22$ & $70.84\pm5.89$\\
Recall $\uparrow$    
& $58.90\pm2.98$  & $58.86\pm2.24$  & $62.88\pm4.99$  & $66.78\pm1.66$  & $67.29\pm1.81$ & $66.87\pm3.22$ & $70.84\pm5.89$\\
Precision $\uparrow$ 
& $59.56\pm2.74$  & $59.91\pm2.57$  & $63.12\pm5.02$  & $67.14\pm1.70$  & $67.55\pm1.97$ & $67.06\pm3.34$ & $70.99\pm5.80$\\
F1-Score $\uparrow$  
& $58.39\pm3.09$  & $57.64\pm1.96$  & $62.48\pm5.25$  & $66.48\pm1.96$  & $67.21\pm1.87$ & $66.83\pm3.21$ & $70.77\pm5.97$\\
ROC $\uparrow$       
& $59.00\pm2.56$  & $58.57\pm2.05$  & $62.67\pm5.15$  & $67.26\pm2.81$  & $64.57\pm2.76$ & $67.26\pm6.00$ & $71.49\pm5.73$\\
\bottomrule
\end{tabular}
}
\label{fmri_generation1}
\end{table*}


\begin{table*}[t]
\centering
\caption{The best generation quality and classification performance on the ABIDE dataset for different generative models using the Schaefer parcellation, trained on ground-truth data augmented at three levels. ``--'' denotes FC-based generation. \textit{Full results} refer to the Supplementary Material.}
\resizebox{\textwidth}{!}{%
\begin{tabular}{r|ccccccc}
\toprule
& \multicolumn{1}{c}{W/O Aug.}
& \multicolumn{1}{c}{Vanilla-GAN}
& \multicolumn{1}{c}{2D-DCGAN}
& \multicolumn{1}{c}{TimeGAN}
& \multicolumn{1}{c}{Diffusion TS}
& \multicolumn{1}{c}{T2I-Diff}
& \multicolumn{1}{c}{DSFM (Ours)} \\
\cmidrule(lr){2-2}\cmidrule(lr){3-3}%
\cmidrule(lr){4-4}\cmidrule(lr){5-5}%
\cmidrule(lr){6-6}\cmidrule(lr){7-7}%
\cmidrule(lr){8-8}
Metric
& Real (R.)
& R.\,+\,Synth.\ 2$\times$
& R.\,+\,Synth.\ 1$\times$
& R.\,+\,Synth.\ 1$\times$
& R.\,+\,Synth.\ 1$\times$
& R.\,+\,Synth.\ 1$\times$
& R.\,+\,Synth.\ 1$\times$ \\
\midrule
Context-FID $\downarrow$
& $-$  & $-$  & $-$ & $4.25\pm1.30$  & $0.51\pm0.07$ & $0.82\pm0.21$ & $0.07\pm0.01$\\
Correlational $\downarrow$
& $-$  & $-$  & $-$  & $238.70\pm5.49$  & $26.13\pm3.83$ & $25.75\pm1.67$ & $13.42\pm2.17$\\
Accuracy $\uparrow$ 
& $64.67\pm1.56$  & $64.87\pm1.05$  & $65.63\pm1.91$ 
& $66.59\pm2.50$  & $66.60\pm1.41$  & $69.69\pm1.55$
& $71.54\pm1.87$\\
Recall $\uparrow$    
& $64.67\pm1.56$  & $64.87\pm1.05$  & $65.63\pm1.91$ 
& $66.59\pm2.50$  & $66.60\pm1.41$  & $69.69\pm1.55$
& $71.54\pm1.87$\\
Precision $\uparrow$ 
& $65.77\pm2.77$  & $65.06\pm1.28$  & $66.63\pm2.44$  & $67.93\pm1.87$  
& $66.60\pm1.41$   & $69.78\pm1.74$  & $73.08\pm3.00$\\
F1-Score $\uparrow$  
& $64.12\pm1.39$  & $64.75\pm0.93$  & $65.26\pm1.90$  & $65.92\pm2.92$  
& $66.58\pm1.41$  & $69.65\pm1.53$ & $70.98\pm2.30$\\
ROC $\uparrow$       
& $67.28\pm4.15$  & $68.07\pm2.88$  & $68.44\pm1.23$  & $67.49\pm3.26$  
& $68.85\pm3.64$  & $71.88\pm2.47$ & $73.78\pm3.64$\\
\bottomrule
\end{tabular}
}
\label{fmri_generation2}
\end{table*}

\section{Experiment}
\subsection{Settings}
\textbf{Data Acquisition and Pre-processing.}
In our experiments, we evaluated the proposed method on one simulated  and two real-world brain disorder datasets. (1) Major Depressive Disorder (MDD): We preprocessed the resting-state fMRI (rs-fMRI) dataset from the REST-meta-MDD Consortium database \citep{restmdddb} using the Data Processing Assistant for Resting-State fMRI (DPARSF) \citep{dprasf}. This dataset comprises 250 Healthy Controls (HC) subjects and 227 individuals diagnosed with Major Depressive Disorder (MDD).
All scans were acquired using a Siemens Tim Trio 3T scanner TR/TE = 2000/30 ms, and a slice thickness of 3mm. 
The brain was parcellated into 116 ROIs, covering cortical and subcortical areas, and the mean BOLD signal for each ROI was extracted across 232 time points using the Automated Anatomical Labeling (AAL) atlas. 
(2) Autism Brain Imaging Data Exchange (ABIDE): 
We preprocessed rs-fMRI scans from multiple international sites. This dataset includes 488 Autism Spectrum Disorder (ASD) patients and 537 normal controls (NC) from the ABIDE database. 
The brain was parcellated into 100 ROIs using the Schaefer atlas, We then extracted mean BOLD signal for each ROI over 200 time points \citep{di2014autism}. 
(3) NetSim: We used the NetSim dataset, a simulated benchmark for evaluating causal discovery in neuroimaging. NetSim provides biologically realistic simulations of BOLD time series, we selected Simulation 4 with 50 channels from the original dataset \citep{smith2011network}.

\subsection{Implementation Details}
\textbf{DSFM Training.}
The proposed DSFM framework generates fMRI signals corresponding to the subjects' condition. The classifiers then discriminate between control and clinical groups for each dataset. We train the DSFM using an AdamW optimizer with a learning rate of $2e^{-4}$ over 300k iterations. All experiments employ a Haar wavelet basis with a 5-level decomposition, yielding real-valued images of size 116$\times$232 (MDD) and 100$\times$200 (ABIDE). 
We compare numbers of function evaluations (NFE) of 20, 50, and 100 steps.
\\
\textbf{Connectivity Network Construction.}
The subject-specific functional connectivity is derived using the Ledoit-Wolf (LDW) regularized shrinkage covariance estimator to preserve the strongest $\tau=40\%$ connections, resulting in a sparse 116$\times$116 (MDD) and 100$\times$100 (ABIDE) FCs with all other connections set to zero.
\\
\textbf{Data Augmentation and Classifier Training.}
The trained DSFM is used to augment real fMRI signals by factors of 1$\times$, 2$\times$, and 3$\times$.
For our classifier, the L2 regularization weight decay is from $10^{-8}$ to $10^{-2}$, the scheduler learning rate reduction factor is from 0.1 to 0.9, and the batch size is from 5 to 16, the same as in \citep{2025tewfmri}. All hyperparameters are selected based on a 5-fold stratified cross-validation.

\begin{table*}[t]
\centering
\caption{Ablation analysis of frequency-specific FC classification by incorporating individual and grouped wavelet subbands on the MDD dataset.}
\resizebox{\textwidth}{!}{%
\begin{tabular}{
r |                     
c c c c c c |            
r c                     
r c                     
r c                     
r c                     
}
\toprule
\multicolumn{1}{c}{}
& \multicolumn{6}{|c|}{Wavelet Subbands}
& \multicolumn{2}{c}{Accuracy $\uparrow$}
& \multicolumn{2}{c}{Precision $\uparrow$}
& \multicolumn{2}{c}{F1-Score $\uparrow$}
& \multicolumn{2}{c}{ROC $\uparrow$} \\
\cmidrule(lr){2-7}
\cmidrule(lr){8-9}
\cmidrule(lr){10-11}
\cmidrule(lr){12-13}
\cmidrule(lr){14-15}
Setting
& LH1 & LH2 & LH3 & LH4 & LH5 & LL
& Value & Drop (\%)
& Value & Drop (\%)
& Value & Drop (\%)
& Value & Drop (\%) \\
\midrule
Full-band
& \cmark & \cmark & \cmark & \cmark & \cmark & \cmark
& 70.84 & --
& 70.99 & --
& 70.77 & --
& 71.49 & -- \\
Low-pass
& \xmark & \xmark & \cmark & \cmark & \cmark & \cmark
& 66.89 & -5.58
& 66.96 & -5.68
& 66.77 & -5.65
& 65.79 & -7.97 \\
Mid-pass
& \cmark & \cmark & \xmark & \xmark & \cmark & \cmark
& 63.30 & -10.64
& 63.74 & -10.21
& 63.05 & -10.91
& 60.41 & -15.50 \\
High-pass
& \cmark & \cmark & \cmark & \cmark & \xmark & \xmark
& 65.40 & -7.68
& 65.55 & -7.66
& 65.18 & -7.90
& 63.66 & -11.0 \\
\midrule
Band-pass 1
& \xmark & \cmark & \cmark & \cmark & \cmark & \cmark
& 66.45 & -6.20
& 66.53 & -6.28
& 66.39 & -6.19
& 68.38 & -4.35 \\
Band-pass 2 
& \cmark & \cmark & \cmark & \cmark & \cmark & \xmark
& 66.66 & -5.90
& 66.88 & -5.79
& 66.60 & -5.89
& 66.74 & -6.64 \\
Band-pass 3  
& \xmark & \cmark & \cmark & \cmark & \cmark & \xmark
& 66.88 & -5.59
& 66.76 & -5.96
& 67.06 & -5.24
& 67.77 & -5.20 \\
\bottomrule
\end{tabular}
}
\label{wavelet_ablation}
\end{table*}

\subsection{Overall Performance}
We first trained our proposed DSFM unconditionally to produce comparable outputs in Table \ref{fmrigeneration1}. Then, we followed the standard setting for the quality and classification evaluation of the conditional time-series generation as described in section \ref{evaluation_protocol}. Our primary goal is to achieve a better cFID score to model complex spatiotemporal patterns and excel in capturing conditional distribution over time.


\begin{wrapfigure}{r}{0.45\textwidth}
\begin{center}
\includegraphics[width=0.42\textwidth]{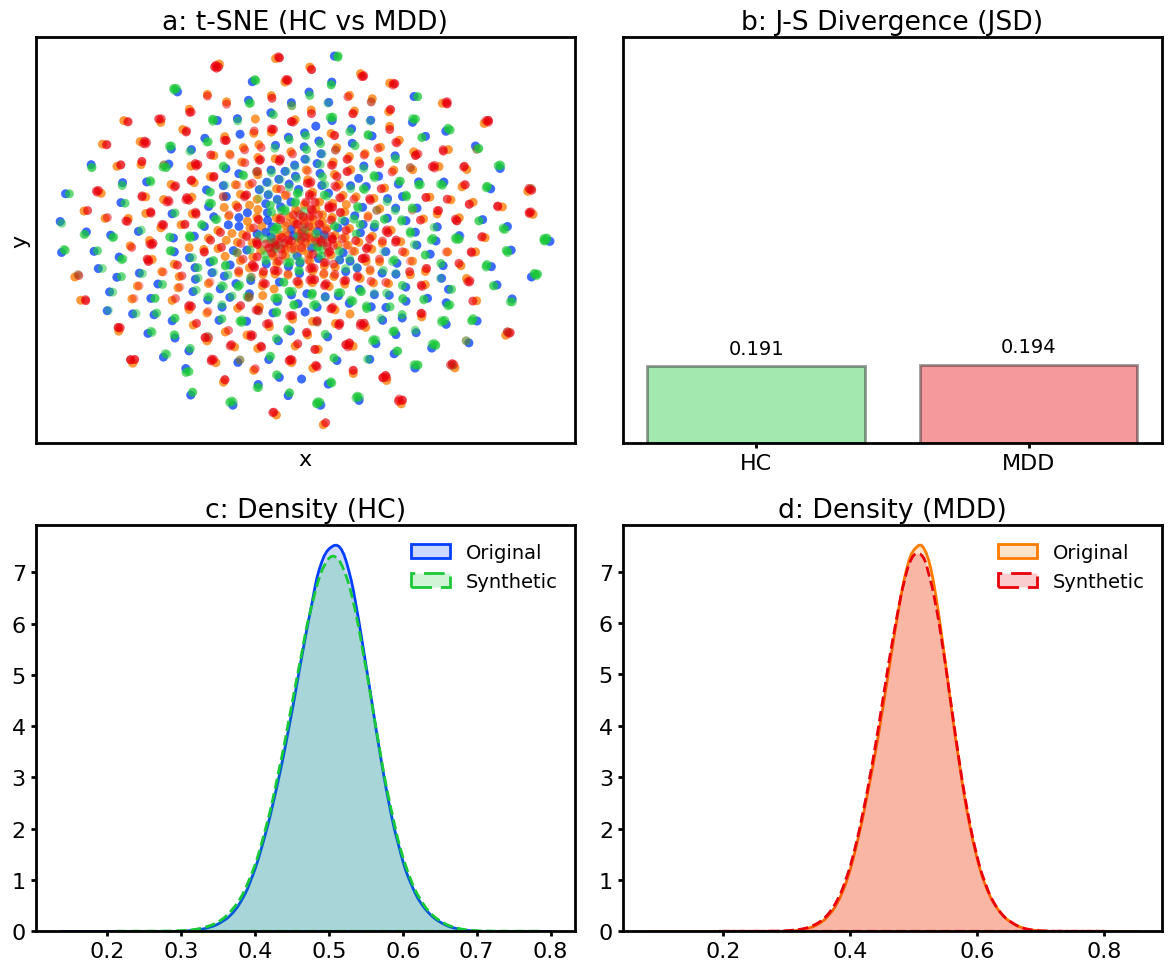}
\end{center}
\caption{We plot the 2D t-SNE embedding of HC and MDD synthetic data generated by our method (top left). Then, we compare with the distributions using Jensen-Shannon Divergence and probability density functions (top right and bottom).}
\label{qualitative_gen}
\end{wrapfigure}

\textbf{Classification Score.}
To validate the fidelity of the generated samples, we evaluate the classification performance of BrainNetCNN \citep{brainnetcnn},  comparing DSFM to GAN and diffusion-based baselines on our fMRI dataset. Here, we use the parameter setting of NFE = 100 in subsequent downstream analyses, as supported by the quality metrics of distinguishing HC and MDD subjects in Table \ref{fmri_generation3}. Table \ref{fmri_generation1} and \ref{fmri_generation2} reports the classification results on the 5-fold cross-validation test set. Notably, DSFM achieves the highest accuracy under 1 $\times$ (MDD) and 1 $\times$ (ASD) data augmentation setting.  Moreover, our model exhibits lower variance across increased augmentation levels, indicating strong generalization and robustness. These results confirm that DSFM not only enriches sample diversity but also preserves discriminative neurophysiological functional patterns critical for clinical tasks. Figure \ref{qualitative_gen} further demonstrates that our proposed DSFM model excels in generating class-conditioned synthetic data whose statistical distribution closely matches that of the original samples. 

\subsection{Ablation Studies}
We first conducted ablation study on six wavelet detail bands, i.e., LH1: 0.125 - 0.250Hz, LH2: 0.0625 - 0.125Hz, LH3: 0.03125 - 0.0625Hz, LH4: 0.015625 - 0.03125Hz, LH5: 0.007825 - 0.015625Hz, and a coarse approximation LL: 0 - 0.007825Hz, contrasting each setting with the full 0 to 0.25Hz spectrum. Table \ref{wavelet_ablation} assesses the impact of different wavelet subbands on model performance. The steepest decline occurred when the mid-frequency LH3–LH4 pair was removed, highlighting the pivotal role of 0.01–0.06 Hz oscillations to capture disease-specific interactions due to insufficient contextual information. Suppressing either the highest (LH1–LH2) or the very lowest components (LL and LH5) produced a comparable, still significant degradation (5–8$\%$), indicating that both rapid fluctuations and slow drifts provide complementary cues. Conversely, removing individual bands such as LH1 and LL also reduced performance by 5–7$\%$, indicating that long-range and slow drifts carry global synchrony patterns essential for classification. Interestingly, we observe that although BOLD fluctuations predominantly lie in the low-frequency band, removing any subbands impaired performance, indicating that disease-related features are distributed across the entire frequency spectrum. 

\begin{wraptable}{r}{6.7cm}
\caption{Ablation studies of block sizes, wavelet bases, normalization strategies, different spectral representations and generative models. \textit{Full results} refer to the Supplementary Material.}
\label{wrap-tab:1}
\begin{tabular}{l|ll}
\toprule  
Configurations      & cFID$\downarrow$& Corr$\downarrow$ \\\midrule
1) $B=4$, MM        &1.505$\pm$0.41 & 57.3$\pm$2.89 \\  \midrule 
2) $B=4$, ECS       &0.098$\pm$0.01 & 18.2$\pm$1.41 \\  \midrule
3) $B=2$, Haar      &0.121$\pm$0.03 & 19.7$\pm$3.03 \\  \midrule
4) $B=4$, Haar      &0.098$\pm$0.01 & 18.2$\pm$1.41 \\  \midrule
5) $B=4$, dB-4      &0.199$\pm$0.10 & 20.7$\pm$3.25 \\  \midrule
DSFM (Ours) &0.098$\pm$0.01 & 18.2$\pm$1.41 \\ \bottomrule
\end{tabular}
\end{wraptable}

Table \ref{wrap-tab:1} presents ablation analyses of different configurations evaluating normalization strategies, block sizes and wavelet bases influence the generative quality of our dual-spectral representation. 
In particular, 1) and 2) show a comparison of MinMax normalization (MM) with the Entropy-Consistent Scaling (ECS). Notably, MinMax scales each wavelet coefficient independently, broadening the distribution of high-frequency coefficients, which results in slower training and reduced performance. 
In contrast, ECS preserves the global spectral coefficient by normalizing DCT frequency components using a percentile-trimmed bound derived from the lowest frequency component, providing better cFID and correlation scores by maintaining the original coefficient distribution. 
\\
Experiments with smaller and larger block size $B$ in 3) and 4) achieve comparable generation performance, with a tradeoff of a smaller $B$ will lead to slower training, larger $B$ leads to the loss of fine-grained local dependencies. 
Finally, the ablations in 4) and 5) using different mother wavelets produces similar generation results on the MDD dataset. This further exemplifies that the underlying fMRI signals do not exhibit strong wavelet-specific bases sensitivity.
Additional ablations in Section \ref{full_results_ablation} further illustrate results for different spectral representations (Fourier and wavelet transforms) and types of generative models (flow matching and diffusion models), against our proposed DSFM.

\begin{table*}[b]
\centering
\caption{Similarity between synthetic and real FC networks across FC edges, node strength, and edge betweenness centrality. Higher values indicate better preservation of real FC topology.}
\resizebox{\textwidth}{!}{%
\begin{tabular}{l|ccccccc}
\toprule
Metric
& Vanilla-GAN
& 1D-DCGAN
& 2D-DCGAN
& WGAN
& WGAN-GP
& DSFM (Ours) \\
\midrule
FC Edges $\uparrow$
& $0.53\pm0.06$  & $0.10\pm0.11$  & $0.54\pm0.49$  & $0.51\pm0.47$ & $0.52\pm0.17$ & $0.99\pm0.00$\\
Node Strength $\uparrow$   
& $0.67\pm0.08$  & $0.30\pm0.16$  & $0.53\pm0.08$  & $0.64\pm0.09$ & $0.62\pm0.03$ & $0.99\pm0.00$\\
Edge Betweenness Centrality $\uparrow$
& $0.11\pm0.02$  & $0.06\pm0.02$  & $0.14\pm0.02$  & $0.14\pm0.02$ & $0.15\pm0.02$ & $0.77\pm0.09$\\
\bottomrule
\end{tabular}
}
\label{fc_graphmetrics}
\end{table*}

\begin{figure*}[t]
    \centering
    \includegraphics[width=0.9\linewidth]{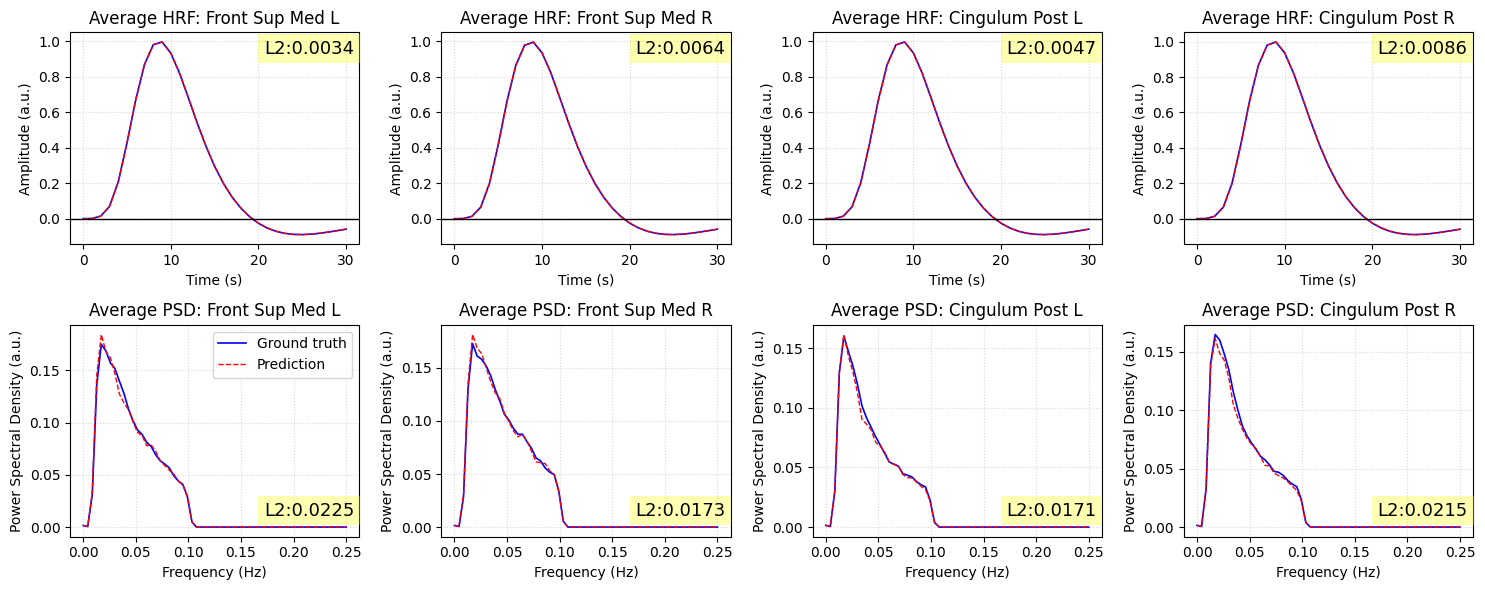}
    \caption{Visualization of the average resting-state hemodynamic response function (rsHRF) and power spectral density (PSD) of real and synthetic BOLD in the Medial Prefrontal Cortex (mPFC) and Posterior Cingulate Cortex (PCC) region of the Default Mode Network (DMN). Highlighted L2 norm quantifies the generation and synthetic results closely resemble the real physiological profiles.}
    \label{neuro_analysis}
\end{figure*}

\section{Neurophysiological Plausibility Analysis}

\begin{figure*}[t]
    \centering
    \includegraphics[width=0.85\linewidth]{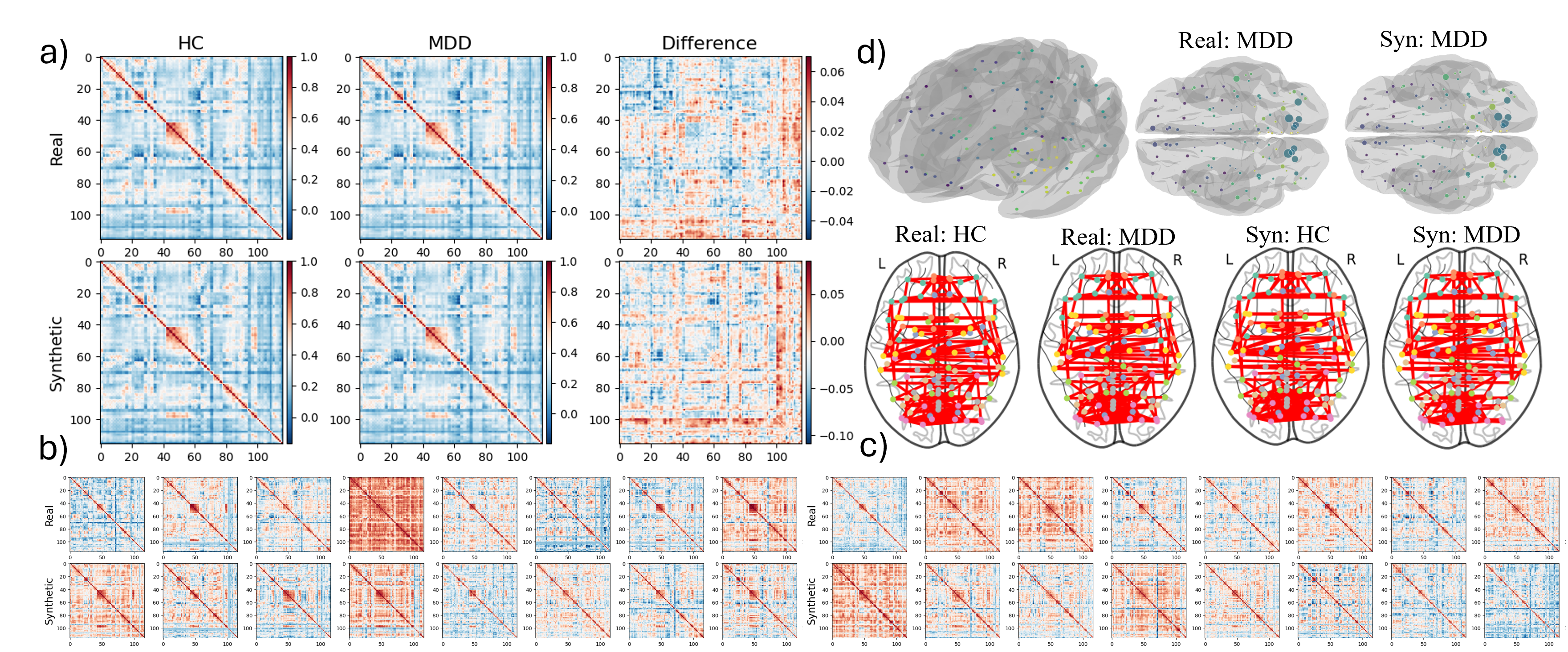}
    \caption{a) Group-averaged connectivity patterns of real and synthetic HC/MDD connectivity patterns and their differences. b) and c) Subject-level connectivity patterns of real and synthetic data from HC and MDD, respectively. d) 3D cortical surface and brain networks visualizations showing node strength (top) and network organization (bottom) for both real and synthetic HC/MDD data.}
    \label{brain_network_fig}
\end{figure*}

Figure \ref{neuro_analysis} presents the qualitative and quantitative comparisons of resting-state hemodynamic response function (rsHRF) and power spectral density (PSD) between real and synthetic signals in two key hubs of the Default Mode Network (DMN). 
The near-perfect overlap of the HRF plots indicates that DSFM preserves the canonical temporal dynamics of the hemodynamic process, rather than merely matching marginal statistics.
Likewise, the close alignment of the PSD curves indicates that the synthetic samples exhibit realistic fMRI spectral characteristics, accurately capturing the dominant low-frequency peaks and the spectral decay across low- and high-frequency components.
The low L2 error for both HRF and PSD provides evidence that DSFM effectively learns underlying spectral-temporal dynamics of the BOLD signals. 
Overall, these analyses suggest that our model generates neurophysiologically plausible  synthetic signals that are suitable for different downstream tasks, as further supported by the brain disorder classification performance in Tables \ref{fmri_generation1} and \ref{fmri_generation2}. 

\section{Functional Connectivity (FC) Analysis and Visualization}
Table \ref{fc_graphmetrics} further evaluates the fidelity of the generated data FC matrices derived from real and synthetic fMRI BOLD signals. Across all graph similarity metrics, DSFM shows higher Pearson correlation with the real data than other GAN-based models, indicating more realistic synthesis of FC networks in both connectivity edges and network topology. These results demonstrate that DSFM not only reproduces plausible spectral-temporal BOLD signals dynamics but also faithfully captures higher-order network transformations, reflecting more coherent interdependencies among FC edges than existing GAN-based generative models.
Figure \ref{brain_network_fig} visualizes group-averaged connectivity, thresholded at 0.6 to highlight significant edge connections. Our analysis reveals that the synthetic FC closely aligns with the functional changes observed in the real FC distribution. Furthermore, the HC and MDD connectograms between both real and synthetic FC indicate a reduction in intra-network connectivity within the left superior frontal gyrus (FrontalSupL) and weakened coupling between the left middle frontal gyrus (FrontalMidL) and the left anterior cingulate cortex (CingulumAntL). The results suggest impaired cognitive functions associated with difficulties in decision-making and emotion regulation, indicating the biological plausibility of the generated data. 

\section{Conclusions and Future Work}
In this paper, we propose DSFM, which effectively captures both temporal dynamics and spectral evolution underlying the ground-truth data distribution for accurate brain signals generation. Future work will further validate MDD and ASD classification using graph-based deep learning models \citep{tew2024kans,tew2025st} and  incorporate energy-based models to identify out-of-distribution (OOD) patterns in brain spectrograms/scalograms associated with neurological disorders \citep{loo2025learning}.

\subsubsection*{Acknowledgments}
This work was supported in part by the Monash University Malaysia and the Ministry of Higher Education, Malaysia under Fundamental Research Grant Scheme FRGS/1/2023/ICT0 2/MUSM/02/1, and by the King Abdullah University of Science and Technology (KAUST) Grant. The authors also acknowledge the support of the Advanced Computing Platform (ACP), Monash University Malaysia, for providing computational resources.

\bibliography{iclr2026_conference}
\bibliographystyle{iclr2026_conference}

\newpage
\appendix
\section{Appendix}
This appendix provides self-contained additional material for the submission titled \textit{"Functional MRI Time Series Generation via Wavelet-Based Image Transform and Spectral Flow Matching for Brain Disorder Identification"}. It includes a detailed about related works, proofs and derivations, the evaluation metrics, full experimental results, limitations, reproducibility statement, as well as the use of large language models (LLMs). 

\section{Related Works}
\subsection{Generative Modeling of fMRI Time Series.} 
Synthesizing fMRI BOLD signals is challenging due to the complex spatiotemporal dependencies, non-stationarity, and interferences arising from neurophysiological fluctuations. Existing time-series generation is principally based on generative adversarial networks (GANs), variational autoencoders (VAEs), and diffusion-based frameworks. 

\textbf{GAN-based approaches:} \citet{timegan} proposes TimeGAN by extending GAN framework with an embedding function and a supervised loss to better capture temporal dynamics, successfully preserving both the static and dynamic characteristics of synthetic time-series data. COT-GAN introduces a causality-aware optimal transport cost,  further aligning real and synthetic samples over time and reducing time-dependent discrepancy between them \citep{cotgan}. 

\textbf{VAE-based approaches:} 
TimeVAE incorporates temporal components into its encoder-decoder network, improving the interpretability of generated time series. Furthermore, it demonstrates success in reducing overall training time compared to adversarial methods \citep{timevae}. 

\textbf{Diffusion-based approaches:} 
DiffTime improves time-series generation by applying hard constraints to enforce fixed points and global minima; alongside soft constraints introduce penalties to guide the model towards desired temporal trends \citep{difftime}. DiffWave achieves high-fidelity time-series generation by replacing autoregressive dependencies with a diffusion denoising chain \citep{diffwave}. More recently, ImagenTime and T2I-Diff demonstrated its capability in modelling long-term time-series benchmarks by converting signals into short-time Fourier transform (STFT) as the image representation, offering an alternative for modelling longer continuous signals using spectral components \citep{naiman2024utilizing,2025tewfmri}. In contrast, the wavelet transform provides multi-resolution bands by using adaptive windows that narrow at high frequencies and widen at low frequencies. These adaptive methods make the wavelet transform better at capturing short transients in continuous signals while still capturing slower trends \citep{murad2025wpmixer,wu2025wavelet}.

\section{Proofs and Derivations}

\subsection{Proof of Proposition 1}
\begin{proof}
The forward-time SPDE (9)
in the DCT domain admits the following mode-wise decomposition:
\begin{align}
\begin{split} \label{eq:heat_dissipation_SPDE_per_mode}
dz_t[k] = \eta(t) \, \lambda_k \, z_t[k] \, dt + g(t,k) \, dW_t[k]
\end{split}
\end{align}
where $W_t[k]$ is the per-mode standard Wiener process.
Subsequently, introduce the variance-preserving (VP) scaling
\begin{align}
\begin{split}
z_t[k] = \alpha(t)\,\tilde z_t[k]
\end{split}
\end{align}
where $\alpha(t)$ is a scalar applied equally to every mode, and the DCT basis remains unchanged, i.e., the scaled $z_t$ still obeys the heat dissipation SPDE. Substituting this into 
(\ref{eq:heat_dissipation_SPDE_per_mode}) 
and applying Itô's lemma gives
\begin{align}
\begin{split} \label{eq:heat_dissipation_SPDE_per_mode_scaled}
dz_t[k] = f(t,k) \, z_t[k] \, dt + g(t,k) \, dW_t[k]
\end{split}
\end{align}
where we have defined
\begin{align}
\begin{split} \label{eq:f}
f(t,k) = \frac{\dot\alpha(t)}{\alpha(t)} - \eta(t) \, \lambda_k
\end{split}
\end{align}
Taking the conditional expectation of the drift term in (\ref{eq:heat_dissipation_SPDE_per_mode_scaled}) and integrating with respect to time yields
\begin{align}
\begin{split} \label{eq:heat_dissipation_SPDE_per_mode_scaled_mean}
\frac{d}{dt} \, \mathbb{E}\big[z_t[k] \,|\, z_0[k]\big] &= f(t,k) \, \mathbb{E}\big[z_t[k] \,|\, z_0[k]\big] \\
\mathbb{E}\big[z_t[k] \,|\, z_0[k]\big] &= \int_{0}^t f(t,k) \, \mu(t,k) \, dt 
= \alpha(t) \, e^{-\lambda_k \tau(t)} = \mu(t,k)
\end{split}
\end{align}
which is exactly the mean schedule defined in
(12).
From (\ref{eq:heat_dissipation_SPDE_per_mode_scaled_mean}), we also have
\begin{align}
\begin{split} \label{eq:dot_mean}
\dot{\mu}(t,k) = f(t,k) \, \mu(t,k)
\end{split}
\end{align}
which we will use to derive the standard deviation.

Applying Itô's lemma once again to the square of (\ref{eq:heat_dissipation_SPDE_per_mode_scaled}), and
taking conditional expectations yields
\begin{align}
\begin{split} \label{eq:heat_dissipation_SPDE_per_mode_scaled_mean}
\frac{d}{dt} \, \mathbb{E}[z_t[k]^2] = 2 \, f(t,k) \, \mathbb{E}\big[z_t[k]^2\big] + g(t,k)^2
\end{split}
\end{align}
Additionally, taking the time-derivative  
\begin{align}
\begin{split}
\sigma(t,k)^2 = \mathrm{Var}\big[z_t[k]\,|\, z_0[k]\big] = \mathbb{E}\big[z_t[k]^2\big] - \mu(t,k)^2
\end{split}
\end{align}
and substituting $\dot{\mu} = f(t,k) \, \mu$ from (\ref{eq:dot_mean}), we have
\begin{align}
\begin{split} \label{eq:dot_sigma_A}
\dot{\sigma}^2 = 2 \, f(t,k) \, \sigma^2 + g(t,k)^2
\end{split}
\end{align}
where we use the shorthand notations $\mu$, $\sigma$ and $\dot{\mu}$, $\dot{\sigma}$ for brevity.
Since the conditional perturbation kernel is variance–preserving, we also have
\begin{align}
\begin{split}
\sigma(t,k)^2 = 1 - \mu(t,k)^2
\end{split}
\end{align}
Differentiating this gives
\begin{align}
\begin{split} \label{eq:dot_sigma_B}
\dot{\sigma}^2 = - 2 \, \mu \, \dot{\mu}
= - 2 \, f(t,k) \, \mu^2
= -2 \, f(t,k) \, (1-\sigma^2)
\end{split}
\end{align}
Equating (\ref{eq:dot_sigma_A}) and (\ref{eq:dot_sigma_B}) gives
\begin{align}
\begin{split} \label{eq:g^2}
g(t,k)^2 = 2 \, \sigma(t,k) \, \big(\dot{\sigma}(t,k) - f(t,k) \, \sigma(t,k) \big)
\end{split}
\end{align}
which is exactly 
(13).
This completes the proof.
\end{proof}

\subsection{Proof of Proposition 2}
\begin{proof}
The Gaussian reparameterization trick
\begin{align}
\begin{split} \label{eq:reparameterization}
z_t[k] |_{z_0[k]} = \mu(t,k) \, z_0[k] + \sigma(t,k) \, \epsilon
\end{split}
\end{align}
follows from the mode-wise conditional perturbation kernel 
(11)
, and its time-derivative gives the conditional vector field
(14).
Using the results (\ref{eq:f}), (\ref{eq:dot_mean}) and (\ref{eq:g^2}) from the proof of Proposition 1, and substituting (\ref{eq:reparameterization}), we can reformulate the conditional vector field 
(14)
as follows:
\begin{align}
\begin{split} \label{eq:conditional_vector_field_to_probability_flow_ODE}
\frac{dz_t[k]}{dt} \bigg|_{z_0[k]} &= v(z_t \,|\, z_0; t, k) \\
&= \dot{\mu} \, z_0[k] + \dot{\sigma} \, \epsilon \\
&= \frac{\dot{\mu}}{\mu} \, \big( z_t[k] - \sigma \, \epsilon \big) + \dot{\sigma} \, \epsilon \\
&= f(t,k) \, \big( z_t[k] |_{z_0[k]} - \sigma \, \epsilon \big) + \dot{\sigma} \, \epsilon \\
&= f(t,k) \, z_t[k] |_{z_0[k]} + \big( \dot{\sigma} - f(t,k) \, \sigma \big) \, \epsilon \\
&= f(t,k) \, z_t[k] |_{z_0[k]} + \frac{1}{2} \, g(t,k)^2 \, \frac{\epsilon}{\sigma} \\
&= f(t,k) \, z_t[k] |_{z_0[k]} + \frac{1}{2} \, g(t,k)^2 \, \nabla_{\!z_t[k]} \log p(z_t[k] \,|\, z_0[k])  \\
\end{split}
\end{align}
which arrives at the conditional probability flow ODE 
(15).
Here, we again use the shorthand notations for brevity.

Applying the law of the unconscious statistician from 
(16)
\begin{align} \label{eq:marginal_vector_field_correspondence}
\begin{split} 
\mathbb{E}_{p_{\text{data}}(z_0 | z_t)} \big[ v(z_t | z_0; t, k) \,|\, z_t \big] 
\end{split}
\end{align}
to the score $\nabla_{\!z_t} \log p(z_t \,|\, z_0)$, we have
\begin{align} \label{eq:marginal_vector_field_correspondence}
\begin{split} 
&\int_{\mathbb{R}} \nabla_{\!z_t} \log p(z_t \,|\, z_0) \, p_{\text{data}}(z_0 | z_t) \; dz_0 \\
&= \int_{\mathbb{R}} \nabla_{\!z_t} \log p(z_t \,|\, z_0) \, \frac{p(z_t \,|\, z_0) \, p_{\text{data}}(z_0)}{\int_{\mathbb{R}} p(z_t \,|\, z_0) \, p_{\text{data}}(z_0) \; dz_0} \; dz_0 \\
&= \int_{\mathbb{R}} \frac{\nabla_{\!z_t} p(z_t \,|\, z_0)}{p(z_t \,|\, z_0)} \, \frac{p(z_t \,|\, z_0) \, p_{\text{data}}(z_0)}{p(z_t)} \; dz_0 \\
&= \frac{1}{p(z_t)} \nabla_{\!z_t}\!\! \int_{\mathbb{R}} p(z_t \,|\, z_0) \, p_{\text{data}}(z_0) \; dz_0 \\
&= \frac{1}{p(z_t)} \nabla_{\!z_t} p(z_t) = \nabla_{\!z_t} \log p(z_t) \\
\end{split}
\end{align}
where we have repeatedly apply the log-derivative trick $\frac{1}{p(z)} \nabla p(z) = \nabla \log p(z)$. This gives us the marginal score and the same applies to the drift term $f(t,k) \, z_t[k] |_{z_0[k]}$ in (\ref{eq:conditional_vector_field_to_probability_flow_ODE}), thus completing the proof.
\end{proof}

\section{Evaluation Protocol}
\label{evaluation_protocol}
\subsection{Baselines.}
\textbf{Time-series Generative Models.}
Our proposed DSFM model is first assessed in the unconditional setting using standard metrics used by T2I-Diff \citep{2025tewfmri}, against seven time-series and time-frequency generative model baselines such as CoT-GAN \citep{cotgan}, DiffTime \citep{difftime}, DiffWave \citep{diffwave}, TimeVAE \citep{timevae}, TimeGAN \citep{timegan}, Diffusion-TS \citep{diffusionts}, and T2I-Diff \citep{2025tewfmri}.
In the conditional setting, we computed the image-domain FID score on subject-specific DCT and DWT image representations \citep{heusel2017gans}. To ensure image-to-signal reconstruction quality, we evaluate the time-domain using the context-FID (cFID) score \citep{jeha2022psa}. 
\\
\textbf{fMRI Generative Models.}
We further compared DSFM with FC-based GAN models, such as Vanilla-GAN \citep{vanilagan}, 1D-DCGAN \citep{dcgan}, 2D-DCGAN \citep{tan2024fmri}, WGAN and WGAN-GP \citep{wgangp}. 

\subsection{Time-Series Metrics.}
We utilize the standard time-series generation metrics from \cite{naiman2024utilizing}. We employ the following four metrics and provide their mathematical formulations to ensure comparable evaluation across multiple aspects:

\textbf{Discriminative (Disc.) \& Predictive score (Pred.).}
We adopt the same experimental setup of \citep{timegan} for both the discriminative and predictive scores. Both the classifier and sequence-prediction model use a two-layer GRU-based architecture. The discriminative score is computed as $| \text{accuracy} - 0.5 |$, where lower scores indicate better indistinguishability, and higher scores reflect greater divergence. The predictive score is the mean absolute error (MAE) of the one-step-ahead predictions and the ground-truth values.

\textbf{Context-FID score (cFID).}
Context-FID score is a time-series adaptation of the image-based Frechet Inception Distance (FID) that measures how close in distribution synthetic data is to the real data in a learned embedding space \citep{jeha2022psa}. Instead of image features, it uses a trained encoder called TS2Vec to capture temporal context. Lower scores indicate higher fidelity and have been shown to correlate with better downstream tasks.

\textbf{Correlational score (Corr.).}
Following \citep{liao2020conditional}, we first estimate the covariance of the \textit{i}th and \textit{j}th
feature of time series as follows:
\begin{equation}
\mathrm{cov}_{i,j}
= \frac{1}{T}\sum_{t=1}^{T} X_{i}^{t} X_{j}^{t}
- \left(\frac{1}{T}\sum_{t=1}^{T} X_{i}^{t}\right)
  \left(\frac{1}{T}\sum_{t=1}^{T} X_{j}^{t}\right)
\label{eq:cov}
\end{equation}
Then, the correlation score is defined as the average absolute difference between corresponding pairwise correlations in the real and synthetic data:
\begin{equation}
\mathrm{Corr}
= \frac{1}{10}
\sum_{i,j}
\left|
\frac{\mathrm{cov}^{r}_{i,j}}{\sqrt{\mathrm{cov}^{r}_{i,i}\,\mathrm{cov}^{r}_{j,j}}}
-
\frac{\mathrm{cov}^{s}_{i,j}}{\sqrt{\mathrm{cov}^{s}_{i,i}\,\mathrm{cov}^{s}_{j,j}}}
\right|
\label{eq:corrscore}
\end{equation}

\subsection{Classification Metrics.}
We quantify classification performance using accuracy, precision, recall, F1-score, and the area under the ROC curve, with larger values indicating better performance; their definitions are given in \eqref{eq:acc}-\eqref{eq:roc}.
\begin{align}
&\text{Accuracy (ACC)} = \frac{\text{TP} + \text{TN}}{\text{TP} + \text{TN} + \text{FP} + \text{FN}}\label{eq:acc}
\\
&\text{Precision (PRE)} = \frac{\text{TP}}{\text{TP} + \text{FP}}\label{eq:pre}
\\
&\text{Recall (REC)} = \frac{\text{TP}}{\text{TP} + \text{FN}}\label{eq:rec}
\\
&\text{F1-score} = 2 \; \cdot \frac{\text{PRE} \times \text{REC}}{\text{PRE} + \text{REC}}\label{eq:f1}
\\
&\text{ROC} = \int_{0}^{1} \text{TPR}(\tau) \, d\bigl(\text{FPR}(\tau)\bigr)
\label{eq:roc}
\end{align}

\section{Additional Experimental Results}

\begin{table*}[h]
\centering
\caption{Evaluation of our proposed DSFM with class-conditional (HC vs MDD) generation under varying NFE.}
\resizebox{\textwidth}{!}{%
\begin{tabular}{c|ccccccc}
\toprule
\multirow{2.5}{*}{NFE}
& \multicolumn{2}{c}{Discrete Cosine Transform (DCT)}
& \multicolumn{2}{c}{Discrete Wavelet Transform (DWT)}
& \multicolumn{2}{c}{Signal Transform (Time)} \\
\cmidrule(lr){2-3} \cmidrule(lr){4-5} \cmidrule(lr){6-7}
& HC $\downarrow$ & MDD $\downarrow$ 
& HC $\downarrow$ & MDD $\downarrow$
& HC $\downarrow$ & MDD $\downarrow$\\
\midrule
\multirow{1}{*}{20}
& 5.303$\pm$1.890 & 5.352$\pm$2.126     
& 4.552$\pm$0.150 & 4.569$\pm$0.203  
& 1.703$\pm$0.302 & 1.428$\pm$0.785 \\
\midrule
\multirow{1}{*}{50}
& 5.481$\pm$1.892 & 5.580$\pm$1.528 
& 4.567$\pm$0.303 & 4.624$\pm$0.100  
& 1.327$\pm$0.208 & 1.312$\pm$0.186 \\
\midrule
\multirow{1}{*}{100}
& 5.079$\pm$1.507 & 5.386$\pm$1.448  
& 4.520$\pm$0.191 & 4.569$\pm$0.134  
& 1.237$\pm$0.519  &  1.255$\pm$0.471 \\
\bottomrule
\end{tabular}
\label{fmri_generation3}
}
\end{table*}

\subsection{Conditional Generation Quality across time and frequency domains.}
Table \ref{fmri_generation3} compares the generative fidelity of our DSFM framework across three domains: frequency (DCT), time-scale (DWT), and the raw time-series representations. Overall, DSFM demonstrates competitive performance in the DWT domain by achieving the lowest FID across HC and MDD subjects with hyperparameter settings of NFE = 100, indicating precise reconstruction of scale-specific BOLD dynamics. Consistently low cFID values in the time domain further confirm that the synthetic signals remain well aligned with in-distribution temporal patterns, outlining that the model is complementary with additional spectral features. 
In contrast, we also observe that increasing the number of NFE from 20 to 100 consistently reduces error across subjects. 
These results validate DSFM as an effective time-series-to-image framework for synthesizing biologically plausible, frequency-aligned fMRI signals across representations.

\subsection{Full Results of Ablation Studies.}
\label{full_results_ablation}

\begin{table}[t]
\centering
\caption{Ablation of block size, wavelet basis, normalization strategy and spectral representations (Fourier and wavelet transforms) and type of generative models (flow matching and diffusion).}
\resizebox{\textwidth}{!}{%
\begin{tabular}{l|cccc}
\toprule
Configurations & Context-FID $\downarrow$ & Correlational $\downarrow$ & Discriminative $\downarrow$ & Predictive $\downarrow$ \\
\midrule
1) $B=4$, MM   & $1.51 \pm 0.41$ & $57.30 \pm 2.89$ & $0.49\pm0.00$ & $0.05\pm0.00$ \\
2) $B=4$, ECS  & $0.10 \pm 0.01$ & $18.20 \pm 1.41$ & $0.17 \pm 0.05$ & $0.04 \pm 0.00$ \\
\midrule
3) $B=2$, Haar & $0.12 \pm 0.03$ & $19.70 \pm 3.03$ & $0.13 \pm 0.04$ & $0.04 \pm 0.00$ \\
4) $B=4$, Haar & $0.10 \pm 0.01$ & $18.20 \pm 1.41$ & $0.17 \pm 0.05$ & $0.04 \pm 0.00$ \\
5) $B=4$, dB-4 & $0.20 \pm 0.10$ & $20.70 \pm 3.25$ & $0.13 \pm 0.04$ & $0.04 \pm 0.00$ \\
\midrule
6) Flow Matching, Fourier  & $1.00\pm0.20$ & $107.90\pm14.35$ & $0.46\pm0.02$ & $0.05\pm0.00$ \\
7) Flow Matching, Wavelet  & $2.19\pm0.34$ & $41.88\pm3.37$ & $0.29\pm0.18$ & $0.04\pm0.00$ \\
8) Diffusion, Wavelet & $3.57\pm0.68$ & $44.55\pm3.83$ & $0.48\pm0.02$ & $0.04\pm0.00$ \\
\midrule
DSFM (Ours)
& 0.10  $\pm$ 0.01
& 18.20 $\pm$ 1.41 
& 0.17  $\pm$ 0.05
& 0.04  $\pm$ 0.00 \\
\bottomrule
\end{tabular}
}
\label{full_ablations}
\end{table}

Table \ref{full_ablations} presents the complete ablation results on the MDD dataset with additional experiments on different spectral representations and type of generative models.
In comparison, the Fourier representation achieves a better context-FID score, indicating strength in capturing global distributional characteristics, but it falls short on the other metrics relative to the wavelet representation. This observations exemplify that the time–frequency localization property in wavelet representation is more effective at preserving local and multi-scale structural information. 
Moreover, the better results of flow matching model with wavelet representation across all metrics suggest a smoother and stable noise-to-data distribution alignment than that of the diffusion-based wavelet approach. 
Overall, these results demonstrate that DSFM outperformed all the configurations across all time-series generative metrics with the advantage of dual-spectral transformation and spectral flow matching.


\subsection{Full Results of Classification Performance.}

\begin{table*}[h]
\centering
\resizebox{\textwidth}{!}{
\begin{tabular}{l|l|c|c|c|c|c}
\toprule
Method
&Train Set
&Accuracy $\uparrow$
&Recall $\uparrow$
&Precision $\uparrow$
&F1-Score $\uparrow$
&ROC $\uparrow$ \\
\midrule
W/O Augmentation
& Real & 58.90 ± 2.98 & 58.90 ± 2.98 & 59.56 ± 2.74 & 58.39 ± 3.09 &  59.00 ± 2.56 \\
\midrule
\multirow{3}{*}{Vanila-GAN}
& Real + Synth 1× & 56.90 ±  1.66 & 56.90 ±  1.66 & 56.40 ± 2.86 & 53.68 ± 3.31 & 56.29 ± 1.92 \\
& Real + Synth 2× & 50.71 ± 3.69 & 50.71 ± 3.69 & 48.60 ± 7.23 & 46.74 ± 5.18 & 50.81 ± 4.13 \\
& Real + Synth 3× & 58.86 ± 2.24 &  58.86 ± 2.24 & 59.91 ± 2.57 &  57.64 ±  1.96 & 58.57 ± 2.05 \\
\midrule
\multirow{3}{*}{1D-DCGAN}
& Real + Synth 1× & 62.94 ± 2.01 & 62.94 ± 2.01 & 63.43 ± 2.20 & 62.23 ± 2.68 & 62.71 ± 2.26 \\
& Real + Synth 2× & 65.04 ± 2.02 & 65.04 ± 2.02 & 66.35 ± 2.13 & 64.12 ± 2.10 & 64.74 ± 2.08 \\
& Real + Synth 3× & 58.21 ± 2.98 & 58.21 ± 2.98 & 55.70 ± 6.58 & 52.86 ± 4.14 & 57.38 ± 3.11 \\
\midrule
\multirow{3}{*}{2D-DCGAN}
& Real + Synth 1× & 60.78 ± 4.98 & 60.78 ± 4.98 & 61.30 ± 5.34 & 60.01 ± 5.00 & 60.33 ± 5.03 \\
& Real + Synth 2× & 61.41 ± 2.59 &  61.41 ± 2.59 & 61.99 ± 3.73 &  62.18 ± 3.29 & 61.04 ± 2.79 \\
& Real + Synth 3× & 62.88 ± 4.99 & 62.88 ± 4.99 & 63.12 ±  5.02 & 62.48 ± 5.25 &  62.67 ± 5.15 \\
\midrule
\multirow{3}{*}{WGAN}
& Real + Synth 1× & 64.98 ± 5.54 & 64.98 ± 5.54 & 65.19 ±  5.34 & 64.86 ± 5.61 & 64.95 ± 5.39 \\
& Real + Synth 2× & 60.59 ± 1.81 & 60.59 ±  1.81 & 60.89 ± 1.96 & 60.35 ± 1.78 & 60.53 ± 1.84 \\
& Real + Synth 3× & 61.83 ± 3.03 & 61.83 ± 3.03 & 62.27 ± 3.29 & 61.44  ± 2.70 & 61.58 ± 2.73 \\
\midrule
\multirow{3}{*}{WGAN-GP}
& Real + Synth 1× & 66.02 ± 4.25 & 66.02 ± 4.25 & 66.22 ± 4.24 & 65.93 ± 4.20 & 65.95 ± 4.13 \\
& Real + Synth 2× & 64.76 ± 4.25 & 64.76 ± 4.25 & 65.67 ± 4.08 & 64.23 ± 4.52 & 64.73 ± 4.14 \\
& Real + Synth 3× & 64.56 ± 3.18 & 64.56 ± 3.18 & 64.78 ± 3.17 & 64.38 ± 3.15 & 64.41 ± 3.08 \\
\midrule
\multirow{3}{*}{T2I-Diff}
& Real + Synth 1× & 66.87 ± 3.22 & 66.87 ± 3.22 & 67.06 ± 3.34 & 66.83 ± 3.21 & 67.26 ± 6.00 \\
& Real + Synth 2× & 65.41 ± 2.37 & 65.41 ± 2.37 & 66.30 ± 1.67 & 64.73 ± 2.80 & 65.75 ± 3.22 \\
& Real + Synth 3× & 66.03 ± 1.75 & 66.03 ± 1.75 & 66.50 ± 1.32 & 65.85 ± 1.82 & 66.58 ± 5.33 \\
\midrule
\multirow{3}{*}{\textbf{DSFM (Ours)}}
& Real + Synth 1× & 70.84 ± 5.89 & 70.84 ± 5.89 & 70.99 ± 5.80
& 70.77 ± 5.97 & 71.49 ± 5.73 \\
& Real + Synth 2× & 69.58 ± 3.89 & 69.58 ± 3.89 & 69.75 ± 3.72 & 69.43 ± 3.86 & 69.91 ± 4.23 \\
& Real + Synth 3× & 69.80 ± 3.13  & 69.80 ± 3.13 & 69.61 ± 3.02 & 69.80 ± 3.13 & 69.00 ± 4.29 \\
\bottomrule
\end{tabular}
}
\caption{Classification performance (MDD) of different generative models trained on the ground-truth data with an increasing amount of augmented time series data using our proposed model.}
\label{fMRI_classification_MDD}
\end{table*}

\begin{table*}[h]
\centering
\resizebox{\textwidth}{!}{
\begin{tabular}{l|l|c|c|c|c|c}
\toprule
Method
&Train Set
&Accuracy $\uparrow$
&Recall $\uparrow$
&Precision $\uparrow$
&F1-Score $\uparrow$
&ROC $\uparrow$ \\
\midrule
W/O Augmentation
& Real & 64.67 ± 1.56 & 64.67 ± 1.56 & 65.77 ± 2.77 & 64.12 ± 1.39 & 67.28 ± 4.15 \\
\midrule
\multirow{3}{*}{Vanila-GAN}
& Real + Synth 1× & 64.10 ± 3.28 & 64.10 ± 3.28 & 64.33 ± 3.34 & 63.74 ± 3.50 & 65.59 ± 3.83 \\
& Real + Synth 2× & 64.87 ± 1.05 & 64.87 ± 1.05 & 65.06 ± 1.28 & 64.75 ± 0.93 & 68.07 ± 2.88 \\
& Real + Synth 3× & 62.94 ± 4.68 & 62.94 ± 4.68 & 62.94 ± 4.69 & 62.91 ± 4.70 & 64.48 ± 5.20 \\
\midrule
\multirow{3}{*}{2D-DCGAN}
& Real + Synth 1× & 65.63 ± 1.91 & 65.63 ± 1.91 & 66.63 ± 2.44  & 65.26 ± 1.90 & 68.44 ± 1.23
 \\
& Real + Synth 2× & 65.64 ± 1.41 & 65.64 ± 1.41 & 66.37 ± 2.56 & 65.32 ± 1.08 & 67.80 ± 3.32
 \\
& Real + Synth 3× & 64.86 ± 2.87 & 64.86 ± 2.87 & 65.10 ± 2.82 & 64.66 ± 2.96 & 67.92 ± 4.46 \\
\midrule
\multirow{3}{*}{TimeGAN}
& Real + Synth 1× 
& 66.59 ± 2.50 & 66.59 ± 2.50 & 67.93 ± 1.87 & 65.92 ± 2.92 & 67.49 ± 3.26 \\
& Real + Synth 2× 
& 66.24 ± 1.07 & 66.24 ± 1.07 & 66.89 ± 1.37 & 65.84 ± 1.63 & 69.21 ± 2.03 \\
& Real + Synth 3× 
& 65.26 ± 1.80 & 65.26 ± 1.80 & 65.37 ± 1.83 & 65.21 ± 1.81 & 64.80 ± 1.81 \\
\midrule
\multirow{3}{*}{DiffusionTS}
& Real + Synth 1× & 66.60 ± 1.41 & 66.60 ± 1.41 & 66.60 ± 1.41 & 66.58 ± 1.41 & 68.85 ± 3.64 \\
& Real + Synth 2× & 66.02 ± 2.18 & 66.02 ± 2.18 & 66.20 ± 2.01 & 65.98 ± 2.18 & 66.10 ± 2.09 \\
& Real + Synth 3× & 64.87 ± 1.48 & 64.87 ± 1.48 & 65.10 ± 1.54 & 64.78 ± 1.41 & 66.24 ± 2.92 \\
\midrule
\multirow{3}{*}{T2I-Diff}
& Real + Synth 1× & 69.69 ± 1.55 & 69.69 ± 1.55 & 69.78 ± 1.74 & 69.65 ± 1.53 & 71.88 ± 2.47 \\
& Real + Synth 2× & 69.49 ± 2.14 & 69.49 ± 2.14 & 69.76 ± 1.90 & 69.31 ± 2.38 & 71.23 ± 2.84 \\
& Real + Synth 3× & 68.91 ± 1.57 & 68.91 ± 1.57 & 69.22 ± 1.80 & 68.79 ± 1.49 & 68.79 ± 2.40 \\
\midrule
\textbf{DSFM (Ours)}
& Real + Synth 1× & 71.54 ± 1.87  & 71.54 ± 1.87 & 73.08 ± 3.00 & 70.98 ± 2.30 & 73.78 ± 3.64 \\
\bottomrule
\end{tabular}
}
\caption{Classification performance (ABIDE) of different generative models trained on the ground-truth data with an increasing amount of augmented time series data using our proposed model.}
\label{fMRI_classification_ABIDE}
\end{table*}

Table \ref{fMRI_classification_MDD} and \ref{fMRI_classification_ABIDE} presents the complete classification results on the MDD and ABIDE datasets across three augmentation levels. The consistent performance gains at each augmentation level indicate that the synthesized functional conductivity (FC) matrices accurately capture brain connectivity patterns and that the data augmentation strategy significantly improves classifier's generalization to unseen samples. 
Notably, DSFM achieved better performance even with only a single  augmentation level.

\section{Additional Visualization}

\begin{figure}[h]
\centering
\includegraphics[width=0.46\linewidth]{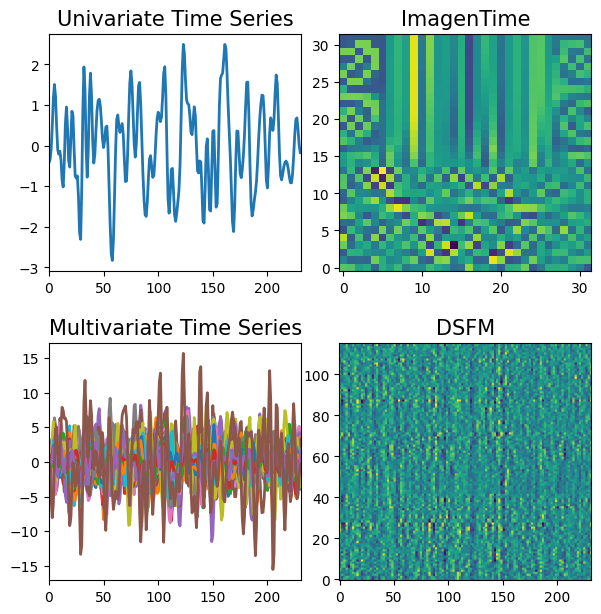}
\caption{Comparison of univariate and multivariate spectral representations: ImagenTime/T2I-Diff and our proposed DSFM.}
\label{fig:spec_trans}
\end{figure}

\subsection{Spectral Image Transformations}
Figure \ref{fig:spec_trans} illustrates the forward and inverse processes of ImagenTime/T2I-Diff and DSFM applied to our proposed fMRI signals. The top row shows an univariate STFT real-valued coefficients, and the bottom row presents a single subband (Detail 1) of multivariate DWT coefficient map. Our framework directly transforms multivariate BOLD signals into a single image representation.


\subsection{Frequency-Specific FC Analysis}
Figure \ref{fig:freq-spec} compares the HC and MDD FC matrices against the ground-truth data correlation across different wavelet subbands. Consistent with the full-band correlation, removing the highest-frequency subbands (D1 and D2), or combining D1 with the lowest band (A5) preserves dense edge connections near the main diagonal. In contrast, removing the mid-frequency subbands (D3 and D4) results in sparser connectivity, particularly in the lower-right region of the matrices.


\begin{figure*}[t]
\centering
\includegraphics[width=0.8\textwidth]{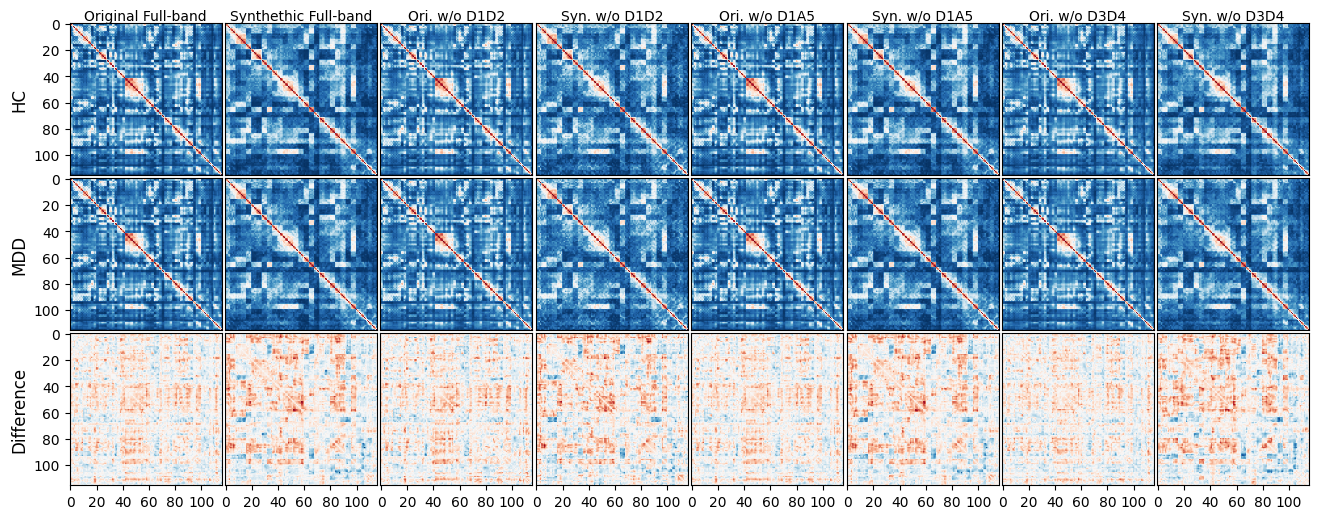}
\caption{Frequency‐specific functional connectivity (FC) matrices for healthy controls (HC) and patients with major depressive disorder (MDD), alongside their differences. The FCs are shown under four different conditions: full-band; removal of the highest-frequency subbands (D1 + D2) and the lowest-frequency component (A5), which both yield the two highest classification scores; and removal of the mid-band subbands (D3 + D4), which produces the greatest deviations and the lowest score.}
\label{fig:freq-spec}
\end{figure*}

\section{Computational cost}
The training of the proposed DSFM required 22 hours, 40 minutes, and 52.698 seconds of wall-clock time, while inference for generating the full samples took 48 minutes and 48.98 seconds with 1x A100 GPU. The model contains 130,844,352 parameters.

\section{Limitations}
Currently, DSFM is specially designed for the generation of resting-state fMRI signals. This opens a valuable opportunity to expand our work to other human brain activity signals, such as electroencephalography (EEG), functional near-infrared spectroscopy (fNIRS), and magnetoencephalography (MEG). Our spectral flow matching framework offers flexibility to capture spectral-temporal dynamics of other neural signals with frequency-specific representation.


\section{Reproducibility statement}
We provide the datasets, source code, and configurations for all key experiments, including instructions on how to preprocess data and train the models at \href{https://github.com/htew0001/DSFM.git}{https://github.com/htew0001/DSFM.git}.

\section{The Use of Large Language Models (LLMs)}
We used LLMs solely for grammar correction. All ideas, analyses, and results are by the authors.

\end{document}